\documentclass{article}

\usepackage{microtype}
\usepackage{graphicx}
\usepackage{subcaption}
\usepackage{booktabs} 
\usepackage{paralist}

\usepackage{hyperref}

\usepackage[accepted]{icml2026}

\usepackage{amsmath}
\usepackage{amssymb}
\usepackage{mathtools}
\usepackage{amsthm}
\usepackage{comment}

\usepackage[capitalize,noabbrev]{cleveref}
\usepackage{paralist}

\usepackage{booktabs}
\usepackage{multirow}
\usepackage{graphicx}

\usepackage{amsmath}
\usepackage{fontawesome5}
\usepackage{tikz}

\usepackage{wrapfig}
\usepackage{amssymb}

\usetikzlibrary{patterns, positioning, calc, fit, arrows.meta, shadows, backgrounds, shapes.geometric}
\usepackage{float} 

\usepackage{stfloats}
\usepackage{placeins}

\usepackage{url}

\usepackage{xspace}
\newcommand{\acr}{OSC\xspace}

\newcommand{\rbb}{\mathbb{R}}

\newcommand{\ccal}{\mathcal{C}}
\newcommand{\vcal}{\mathcal{V}}

\newcommand{\oscbas}{\mathbf{B}_\ccal}

\theoremstyle{plain}
\newtheorem{theorem}{Theorem}[section]

\newtheorem{lemma}[theorem]{Lemma}

\theoremstyle{definition}
\newtheorem{definition}[theorem]{Definition}

\theoremstyle{remark}
\newtheorem{remark}[theorem]{Remark}
\theoremstyle{example}
\newtheorem{example}[theorem]{Example}
\usepackage[textsize=tiny]{todonotes}

\icmltitlerunning{Subspace Carving in Order-$p$ Tensor Memories}

\begin{document}

\twocolumn[
  \icmltitle{Recursive Binding on a Budget: Subspace Carving in Order-$p$ Tensor Memories}



  \icmlsetsymbol{equal}{*}

  \begin{icmlauthorlist}
    \icmlauthor{Travis Pence}{wisconsin_cs}
    \icmlauthor{Daisuke Yamada}{wisconsin_cs}
    \icmlauthor{Vikas Singh}{wisconsin_bs,wisconsin_cs}
  \end{icmlauthorlist}

  \icmlaffiliation{wisconsin_cs}{Department of Computer Science, University of Wisconsin-Madison, United States}
  \icmlaffiliation{wisconsin_bs}{Department of Biostatistics and Med. Info., University of Wisconsin-Madison, United States}

  \icmlcorrespondingauthor{Travis Pence}{tnpence@wisc.edu}

  \icmlkeywords{Machine Learning, ICML}

  \vskip 0.3in
]



\printAffiliationsAndNotice{}  

\begin{abstract}

Tensor Product Representations provide the structural fidelity required for symbolic reasoning in models but suffer from {\em exponential} dimensionality growth when encoding deep recursive structures. Conversely, Vector Symbolic Architectures maintain {\em constant} dimensionality but sacrifice capacity and fidelity due to noisy compression via superposition. In this work, we propose \textbf{Orthogonal Subspace Carving (OSC)}, a memory architecture that binds {\em fillers} to {\em roles} by projecting onto the null space of the role basis before aggregating into a fixed order-$p$ tensor. OSC uses projections to enforce geometric orthogonality between bound structures within a {\em static} memory trace. We show that this mechanism decouples the tensor order from the structural depth, enabling deep recursive binding within a {\em constant} memory footprint. By performing retrieval via recognition, this construction allows for component vectors that are {\em orders of magnitude} smaller than the memory tensor, giving superior memory efficiency in settings involving high superposition. We also show that TPR is a special case of binding in Clifford algebra, and give a Clifford formulation of OSC.

\end{abstract}
\vspace{-0.61cm}
\section{Introduction}
\label{sec:intro}
Human reasoning is distinguished by its ability to perform structured operations over symbolic representations \citep{newell1980physical, newell1982knowledge, marcus2001algebraic}. Standard neural network models struggle to systematically generalize to novel compositions of known structures \citep{dziri2023faith, li2023slog, kim2020cogs, keysers2020measuring}. Bridging this ``neuro-symbolic gap'' requires representations that can support rich, recursive structure but are compatible with the gradient-based optimization of continuous vector spaces \citep{kanerva2009hyperdimensional, Kleyko_2022}.

An important line of work addressing the challenge above is the Tensor Product Representation (TPR) \citep{SMOLENSKY1990159}. As we will discuss in more detail shortly, TPRs provide a mathematically rigorous method for binding variables (``fillers'') to structural ``roles'' via the outer product operation. This explicitly constructs a representation that preserves perfect orthogonality between bindings, and allows exact retrieval of constituents without interference (i.e., perfectly). However, the dimensionality of a TPR scales {\em exponentially} with the depth of the structure. For example, a tree of depth $k$ needs a tensor space of order $d^k$, making exact deep recursive binding intractable \citep{soulos2023differentiabletreeoperationspromote}. While recent sparse approximations can mitigate this scaling, they suffer from significant performance degradation on deep structures \citep{soulos2024compositionalgeneralizationdistributionalshifts}.

Vector Symbolic Architectures (VSAs) offer a compressed alternative \citep{kanerva2009hyperdimensional}. VSAs employ ``reduced'' binding operations, such as circular convolution or element-wise multiplication, that map the tensor product back into the fixed-dimensional space of the component vectors \citep{Kleyko_2022}. This fixed dimensionality makes VSAs efficient. It allows encoding complex structures within a {\em constant} memory footprint. But, there is a trade-off. In VSAs that use approximate inverses, non-orthogonal noise {\em accumulates} with depth. Further, as memory becomes crowded (more superposition), SNR degrades, eventually impacting reliable retrieval \citep{gosmann2019, plate1995}.

Our \textbf{contributions} are: \textbf{(a)} \acr, a novel memory architecture that decouples component vector dimension from memory capacity via a projection-based binding mechanism; \textbf{(b)} a large reduction in memory footprint for high-superposition tasks. We get robust retrieval with component vectors orders of magnitude smaller than the memory space; \textbf{(c)} near-lossless retrieval accuracy in high-superposition regimes, enabled by sub-linear filler scaling that allows for larger superposition memory spaces within a smaller total memory footprint without TPR's exponential dimensionality cost. We do extensive empirical verification against VSA and in standard downstream applications. The code is available at \url{https://github.com/vsingh-group/OrthogonalSubspaceCarving}.

\textbf{Conflict of Interest Disclosure.} There are no conflicts of interest.
\section{Preliminaries}
\label{sec:prelim}
 We first give a short overview of Tensor Product Representations (TPRs) and 
 Vector Symbolic Architectures (VSAs).

\textbf{Spaces.} VSAs/TPRs operate with three specific types of ``spaces'', namely, {\bf filler}, {\bf role}, and {\bf memory}. A \textit{filler} represents the atomic content, concept, or object being stored (e.g., ``blue'', ``dog'', ``5''). A \textit{role} defines the structural slot or attribute associated with that content (e.g., ``color'', ``animal'', ``magnitude''). The {\em association} of a role(s) to a filler creates an instance of that concept within the memory, allowing hierarchical data to be represented. See \Cref{fig:VSA_pic_combined}.

\begin{definition} [VSA/TPR Filler]
    \label{def:vsa_filler}
    The \textbf{filler space} is the hyperspace $\vcal$, usually $\rbb^d$, $\mathbb{C}^d$, or $\{0,1\}^d$. A specific \textbf{filler} $f \in \vcal$ is a single vector in the filler space.
\end{definition}

\begin{definition} [VSA/TPR Context]
    \label{def:vsa_context}
    Similarly, the \textbf{role space} is also $\vcal$. A specific \textbf{role} $r \in \vcal$ is a single vector. A \textbf{context} $\ccal$ is just a collection of roles $r_1, r_2, \ldots, r_c\in\vcal$.
\end{definition}

\vspace{-0.2cm}
VSAs and TPRs use three main manipulation operations.
\begin{compactenum}
    \item \textbf{Binding ($\otimes$):} This operator {\em associates} the role with the filler, creating the bound object $T = f \otimes r$.
    \item \textbf{Unbinding ($\oslash$):} This operator uses the role to {\em retrieve} the filler from the object, $(f\otimes r)\oslash r = f$.
    \item \textbf{Bundling ($\oplus$):} This operator superposes multiple bound objects into one memory space, $M = T_1 \oplus T_2 \oplus \dots \oplus T_k$. This is almost always simple addition. 
\end{compactenum}

\begin{figure}[!b]
\centering
\resizebox{0.6\columnwidth}{!}{%
    \begin{tikzpicture}[
        scale=0.725, transform shape, 
        font=\small\sffamily,
        node distance=0.2cm, 
        vec/.style={
            rectangle,
            draw=#1!70!black,
            fill=#1!20,
            line width=0.9pt,
            rounded corners=2pt,
            minimum width=0.6cm,
            minimum height=1.75cm,
            drop shadow={opacity=0.4, shadow xshift=2pt, shadow yshift=-2pt}
        },
        vec_gray/.style={
            rectangle,
            draw=gray!70!black,
            preaction={fill=gray!25}, 
            pattern color=gray!80,    
            line width=0.9pt,
            rounded corners=2pt,
            minimum width=0.6cm,
            minimum height=1.75cm,
            drop shadow={opacity=0.4, shadow xshift=2pt, shadow yshift=-2pt}
        },
        op/.style={
            circle,
            draw=gray!60,
            fill=white,
            line width=1pt,
            inner sep=0pt,        
            minimum size=0.55cm,  
            font=\large\bfseries,
            drop shadow={opacity=0.3, shadow xshift=1pt, shadow yshift=-1pt}
        },
        label_text/.style={
            font=\sffamily\large,
            text=black!80
        },
        step_box/.style={
            draw=black,
            thick,
            fill=blue!5,        
            rounded corners=8pt,
            inner sep=0.3cm,     
            drop shadow={opacity=0.3, shadow xshift=3pt, shadow yshift=-3pt}
        },
        arrow/.style={
            -{Stealth[length=3mm, width=2mm]},
            draw=gray!60,
            line width=1.5pt,
            rounded corners=5pt
        }
    ]

    
    \begin{scope}[shift={(-0.5, 7.3)}, x=.75cm, y=0.3cm, local bounding box=step1]
        
        \begin{scope}[on background layer]
             \node[step_box, fit={(-0.5, -6) (3.14 * 6 / 4 + 0.5, 7)}] (signal_box) {};
        \end{scope}

        \draw[thick, ->, gray!50] (0,0) -- (3.14 * 6 / 4,0); 
        \draw[line width=1.8pt, blue!70!black, smooth, samples=200, domain=0:3.14 * 6 / 4] 
            plot (\x, {5*sin(2*\x r)}); 
            
        \draw[|-|, thick, red!70!black] (1.3, 0) -- (1.3, -5) 
            node[midway, fill=white, font=\bfseries\large, text=blue!70!black, inner sep=2pt, draw=blue!20] {$\boldsymbol{5}$};
            
        \draw[|-|, thick, green!50!black] (3.14 / 4, 5.5) -- (3.14 * 5 / 4, 5.5) 
            node[midway, fill=white, font=\bfseries\large, text=orange!70!black, inner sep=2pt, draw=orange!20] {$\boldsymbol{\pi}$};
        
        \node[font=\sffamily\large, text=black!60] at (4.1, -3.5) {Step 1};
    \end{scope}

    \node[
        rectangle,
        step_box,
        inner sep=8pt,
        align=left,
        anchor=north west
    ] (codebook) at (4.7, 9.05) {
        \renewcommand{\arraystretch}{1.3}
        \textbf{Codebook} \faBook \\
        \textcolor{gray!60}{\rule{3cm}{0.5pt}} \\
        \faBook\ [\textcolor{red!70!black}{\textbf{amp}}] = $r_{\textcolor{red!70!black}{\textbf{amp}}}$ \\
        \faBook\ [\textcolor{green!60!black}{\textbf{freq}}] = $r_{\textcolor{green!60!black}{\textbf{freq}}}$ \\
        \faBook\ [\textcolor{blue!70!black}{\textbf{5}}] = $f_{\textcolor{blue!70!black}{\boldsymbol{5}}}$ \\
        \faBook\ [\textcolor{orange!70!black}{$\boldsymbol{\pi}$}] = $f_{\textcolor{orange!70!black}{\boldsymbol{\pi}}}$
    };
    \node[font=\sffamily\large, text=black!60, anchor=south east] at (8.1, 6.4) {Step 2};

    \begin{scope}[shift={(0.35, -.08)}, local bounding box=step3]
        
        \node[vec=red] (red) at (0, 3) {};
        \node[label_text, above=0.1cm of red] (red_label) {$r_{\textcolor{red!70!black}{\textbf{amp}}}$};

        \node[op, right=of red] (bind1) {$\otimes$};

        \node[vec=blue, right=of bind1] (blue) {};
        \node[label_text, above=0.1cm of blue] {$f_{\textcolor{blue!70!black}{\boldsymbol{5}}}$};

        \node[op, right=of blue] (eq1) {\raisebox{-4pt}{$=$}};

        \node[vec_gray, pattern=north east lines, right=of eq1] (res1) {};
        \node[label_text, above right=0.0cm and -1.25cm of res1] {$r_{\textcolor{red!70!black}{\textbf{amp}}} \otimes f_{\textcolor{blue!70!black}{\boldsymbol{5}}}$};

        \node[vec=green] (green) at (0, 0.5) {}; 
        \node[label_text, below=0.1cm of green] (green_label) {$r_{\textcolor{green!60!black}{\textbf{freq}}}$};

        \node[op, right=of green] (bind2) {$\otimes$};

        \node[vec=orange, right=of bind2] (orange) {};
        \node[label_text, below=0.0cm of orange] {$f_{\textcolor{orange!70!black}{\boldsymbol{\pi}}}$};

        \node[op, right=of orange] (eq2) {\raisebox{-4pt}{$=$}};

        \node[vec_gray, pattern=dots, right=of eq2] (res2) {};
        \node[label_text, below right=0.0cm and -1.25cm of res2] {$r_{\textcolor{green!60!black}{\textbf{freq}}} \otimes f_{\textcolor{orange!70!black}{\boldsymbol{\pi}}}$};

        \node[op] (plus) at ($(res1)!0.5!(res2) + (0.5, 0)$) {$\oplus$};

        \node[vec=gray, fill=gray!20, right=of plus] (mem_layer1) {};
        \begin{scope}
            \clip [rounded corners=2pt] (mem_layer1.south west) rectangle (mem_layer1.north east);
            \fill[pattern=north east lines, pattern color=gray!80] (mem_layer1.south west) rectangle (mem_layer1.north east);
            \fill[pattern=dots, pattern color=gray!80] (mem_layer1.south west) rectangle (mem_layer1.north east);
        \end{scope}

        \node[label_text, right=0.2cm of mem_layer1, align=left] (mem_label) {
            \textbf{Memory} \\[0.2em]
            $\begin{aligned}
            (r_{\textcolor{red!70!black}{\textbf{amp}}} \otimes& f_{\textcolor{blue!70!black}{\boldsymbol{5}}}) \\
            \oplus& \\
            (r_{\textcolor{green!60!black}{\textbf{freq}}} \otimes& f_{\textcolor{orange!70!black}{\boldsymbol{\pi}}})
            \end{aligned}$
        };
        
        \begin{scope}[on background layer]
             \node[step_box, fit=(red_label)(green_label)(mem_label)] (vector_box) {};
        \end{scope}
        
        \node[font=\sffamily\large, text=black!60, anchor=north east] at (6.5, 4.5) {Step 3};

    \end{scope}

    \draw[arrow] (signal_box.east) -- (codebook.west |- signal_box.east);
    \draw[arrow] (codebook.south) -- (codebook.south |- vector_box.north);

    \end{tikzpicture}%
}
\caption{Raw signal attributes are identified (Step 1) and mapped to vectors using a codebook (Step 2). These are then bound to their corresponding roles and superposed to form the memory (Step 3).}
\label{fig:VSA_pic_combined}
\vspace{-10pt}
\end{figure}

\vspace{-0.2cm}
\subsection{Tensor Product Representations (TPRs)}
In TPRs, the memory space is {\em not} the same as the role and filler space but instead is a higher-order tensor.

\begin{definition} [TPR Binding]
    \label{def:tpr_binding}
    The binding operator is simply the tensor product. Given a filler $f \in \rbb^d$ and a role $r \in \rbb^d$, the bound object $T$ is a matrix in $\rbb^{d \times d}$:
    \begin{equation}
        T = f \otimes r = f r^\top
    \end{equation}
\end{definition}
So, when we associate $1$ role to a filler, the memory space is an order-$2$ tensor. When binding $|\ccal|$ roles to a filler, the memory space is a order-($|\ccal|+1)$ tensor with $d^{|\ccal| + 1}$ entries. This grows exponentially with context size.

\begin{definition} [TPR Unbinding]
    \label{def:tpr_unbinding}
    Unbinding is the inner product {\em contraction} of the bound object with the role vector:
    \begin{equation}
        \hat{f} = T r
    \end{equation}
\end{definition}

\textbf{Orthogonality and Superposition.}
A key property of TPRs is that if the role vectors are mutually orthogonal, superposition is {\em lossless}. That is, multiple filler-role bindings can be bundled into a single memory matrix $M$ without {\em any interference} between terms.
Let roles $r_1, r_2$ be orthogonal. 
\begin{example}
For memory $M = f_1 r_1^\top + f_2 r_2^\top$, retrieval is exact:
\begin{equation}
    M r_1 = f_1 (r_1^\top r_1) + f_2 (r_2^\top r_1) = f_1(1) + f_2(0) = f_1
\end{equation}    
\end{example}

\subsection{Vector Symbolic Architectures (VSAs)}
TPRs guarantee perfect retrieval via orthogonality but memory size scales exponentially (with context size). VSAs instead use compressed, dimensionality-preserving operations. Thus, VSAs can be viewed as an approximation of TPRs. Basically, the tensor product is ``collapsed'' back into the original vector space $\vcal$, trading exactness for a fixed memory size.

\begin{definition} [VSA Binding]
    \label{def:vsa_binding}
    The binding operator is a map $\otimes: \vcal \times \vcal \to \vcal$ that {\em preserves} dimensionality. For a filler $f \in \rbb^d$ and a role $r \in \rbb^d$, the bound object $T$ remains a vector in $\rbb^d$:
    \begin{equation}
        T = f \otimes r \in \rbb^d
    \end{equation}
\end{definition}

\begin{definition} [VSA Unbinding]
    \label{def:vsa_unbinding}
    Unbinding $\oslash: \vcal \times \vcal \to \vcal$ also preserves dimensionality. Given a bound object $T = f \otimes r$, unbinding with the role gives a version of the filler:
    \begin{equation}
        \hat{f} = T \oslash r
    \end{equation}
    In many architectures, this operation is exact for a {\em single} pair ($\hat{f} = f$). But when unbinding with a superposition of multiple pairs, unbinding gives $f$ plus a pseudo-random noise term (from interference of the other stored pairs).
    \begin{equation} 
    (f_1 \otimes r_1 + f_2 \otimes r_2) \oslash r_1 = f_1 + \underbrace{(f_2 \otimes r_2) \oslash r_1}_{\text{noise}}
    \end{equation}
\end{definition}



\subsection{Recognition, Recall, and Clean-up Memories}

Symbolic memory systems can use two distinct query modes. \textbf{Recall} 
reconstructs content from a cue: ``What is stored at this location?'' In contrast, 
\textbf{recognition} tests a hypothesis: ``Is this specific binding present?'' 
Recognition compares a candidate binding against the memory and returns a 
similarity score.
Many systems combine unbinding with a \textbf{clean-up memory}. This is a codebook of valid symbols. After unbinding yields a noisy vector, the system can identify the 
nearest codebook entry. This effectively converts recall into recognition: 
testing each vocabulary item and selecting the best match.


\subsection{Projection Operators}

For a subspace $S \subset \mathbb{R}^d$ with orthonormal basis matrix $B$, 
the orthogonal projection onto $S$ is $P_S = B^T B$. The complementary projection 
onto the orthogonal complement is $P_S^{\perp} = I - B^T B$. $P_S^{\perp}$ is idempotent ($P_S^{\perp} P_S^{\perp} = P_S^{\perp}$), and 
$P_S P_S^{\perp} = 0$ (projections are disjoint).
\section{Structural Challenge  of Symbolic Memory}
\label{sec:sec3}

Consider the task of storing $1000$ parse trees, each of depth $5$, in a symbolic memory. 
A standard TPR needs a memory tensor of order $d^5$ for 
even modest dimension $d=128$. This needs $3 \times 10^{10}$ parameters. VSAs will keep memory at $d=128$,
which is good. But 
superposing $1000$ bindings drives the SNR to $\sqrt{128/1000} \approx .358$ (close to noise floor). 

\textbf{The challenge.} To support symbolic reasoning, a memory system must accommodate deep recursive structures, allow for superposition of multiple structures within a single trace, and maintain high discrimination capabilities at query time. These requirements work against one 
another. 

\subsection{The Memory Triangle} 

The tension we describe above manifests as a three-way trade-off between structural fidelity, memory footprint, and superposition capacity that we can check quickly.

\textbf{Perfect retrieval $\rightarrow$ exponential growth.} TPRs place each binding in a distinct tensor dimension. This gives zero interference. A tree of depth $k$ requires $O(d^k)$ space which is intractable 
for even modest depth.

\textbf{Fixed footprint $\rightarrow$ accumulated interference.} VSAs compress all bindings into $O(d)$ dimensions. Superposing 
$N$ items introduces interference scaling as $O(\sqrt{N/d})$. The capacity degrades as memory fills.

All approaches must occupy some region of the triangle. Trade-offs are  unavoidable. The question is about {\em which} trade-offs yield favorable scaling given the constraints. 

This trade-off can also be cast geometrically. If $N$ unit-norm memory traces are embedded in a $D$-dimensional space, the average squared coherence $\mu^2 = \frac{1}{N(N-1)} \sum_{i \neq j} |\langle T_i, T_j \rangle|^2$ is bounded below by the Welch bound \citep{welch1974lower}
\begin{equation}
\mu^2 \;\geq\; \frac{N - D}{D (N - 1)} \quad \text{whenever } N > D.
\end{equation}
Once the number of stored items exceeds the effective dimension, cross-talk cannot be driven to zero. The relevant design question is therefore not whether interference can be eliminated, but how an architecture allocates effective dimension among context complexity, vocabulary size, and superposition capacity. OSC takes advantage of an effective dimension of $d^p$ for the memory tensor while keeping per-filler storage at $p \cdot d$.

\subsection{Recognition and Local structure can suffice} 
To navigate this trade-off, we must re-evaluate what is strictly necessary for symbolic processing. 

\textbf{Recognition versus reconstruction.}
Most symbolic reasoning systems operate over 
fixed vocabularies: parse trees use a grammar, programs 
use a type system. This prior knowledge shifts the retrieval task in an important way. 
Rather than 
reconstructing an arbitrary vector from a noisy trace (the hard problem), the system needs 
only to determine \emph{which symbol from the codebook is most strongly present}. This 
recasts the goal from generative reconstruction to discriminative recognition.
Even VSAs and TPRs with algebraic unbinding rely on this principle. After unbinding, a 
cleanup memory maps the noisy result to the nearest vocabulary item. The ``exact'' unbinding 
is discarded in favor of vocabulary search. If recognition is unavoidable, 
why pay the cost of maintaining exact unbinding?

\textbf{Only Local precision.} Structural guarantees need not be 
universal. If a system can ensure that fillers relevant to a {\em specific query context} 
are geometrically distinct, it can discriminate well -- even if fillers in different 
contexts are only statistically independent. This suggests a design principle: enforce 
orthogonality {\em locally} within a context, accept quasi-orthogonality {\em globally} 
across contexts. When querying ``subject of this sentence'', it is fine if candidate subjects are distinguishable from each other. They need not be orthogonal to a verb 
of an unrelated sentence stored earlier.

\textbf{A Geometric Reformulation?} These perspectives suggest a shift in how we approach the binding problem. Rather than expanding the dimension to accommodate all possible structures, we can treat structure as a geometric constraint {\bf within a fixed space} via an operation that carves out projected regions for specific roles. Next, we introduce a special memory architecture that realizes these principles.
\vspace{-10pt}
\begin{figure*}[t]
\centering
\resizebox{0.9\textwidth}{!}{%
    \begin{tikzpicture}[
        scale=1.5, 
        z={(-0.3,-0.3)},
        boxstyle/.style={
            draw=gray!50,
            rounded corners=10pt,
            fill=white,
            drop shadow,
            inner sep=0pt,
            thick
        }
    ]
    \begin{scope}[shift={(0,0)}]
        \coordinate (BoxTopLeft) at (-2.8, 4.2);
        \coordinate (BoxBottomRight) at (4.0, 1.2);
        
        \begin{scope}[on background layer]
            \node[fit=(BoxTopLeft) (BoxBottomRight), boxstyle] {};
        \end{scope}

        \node[align=left, anchor=north west] at (-2.6, 3.6, 0) {
            \textbf{Rejection onto} $\mathcal{C}_1$ \\
            $\mathbf{P}_{\mathcal{C}_1}^\bot = I - \mathbf{r}_1\mathbf{r}_1^T$ \\
            $\mathbf{P}_{\mathcal{C}_1}^\bot \mathbf{v} = \textcolor{blue}{\mathbf{v}_{rej}^{(1)}}$
        };
        
        \begin{scope}[shift={(.3,2.5,0)}]
            \coordinate (O2) at (0,0,0);
            \filldraw[fill=blue!20, draw=blue, thick, opacity=0.5] 
                (-1,0,-2) -- (3,0,-2) -- (3,0,3) -- (-1,0,3) -- cycle;
            \node[blue] at (0.75,0.25,2.5) {Rejection Plane};
            \draw[->, thick] (O2) -- (0,1,0) node[above] {$\mathbf{r}_1$};
            \pgfmathsetmacro{\leftscale}{1.5}
            \coordinate (V_vec) at (2 / \leftscale, 1.5 / \leftscale, 1 / \leftscale);
            \draw[->, thick, black] (O2) -- (V_vec) node[right] {$\mathbf{v}$};
            \coordinate (V_rej1) at (2 / \leftscale, 0, 1 / \leftscale);
            
            \draw[->, ultra thick, blue] (O2) -- (V_rej1) 
                node[below right, yshift=0.1cm, text=blue] {$\mathbf{v}_{rej}^{(1)}$};
                
            \draw[dashed, red] (V_rej1) -- (V_vec) node[midway, right, yshift=-0.1cm] {$\mathbf{v}_{\parallel}^{(1)}$};
            \draw (0,0.2,0) -- (-0.2,0.2,0) -- (-0.2,0,0); 
        \end{scope}
    \end{scope}

    \begin{scope}[shift={(7.2,0)}]
        \coordinate (BoxTopLeft) at (-2.8, 4.2);
        \coordinate (BoxBottomRight) at (3.0, 1.2); 
        
        \begin{scope}[on background layer]
            \node[fit=(BoxTopLeft) (BoxBottomRight), boxstyle] {};
        \end{scope}
        
        \begin{scope}
            \clip[rounded corners=10pt] (BoxTopLeft) rectangle (BoxBottomRight);

            \node[align=left, anchor=north west] at (-2.6, 3.6, 0) {
                \textbf{Rejection onto} $\mathcal{C}_2$ \\
                $\mathbf{P}_{\mathcal{C}_2}^\bot = I - \mathbf{r}_2\mathbf{r}_2^T$ \\
                $\mathbf{P}_{\mathcal{C}_2}^\bot \mathbf{v} = \textcolor{green!50!black}{\mathbf{v}_{rej}^{(2)}}$
            };
        
            \begin{scope}[shift={(0.6,2.5,0)}]
                \coordinate (O2) at (0,0,0);
                
                \filldraw[fill=green!20, draw=green, thick, opacity=0.5] 
                    (0,-.5,-2) -- (0,1.8,-2) -- (0,1.8,3) -- (0,-0.5,3) -- cycle;
                \node[green!50!black] at (2,0.2,2.8) {Rejection Plane};
                \draw[->, thick] (O2) -- (1,0,0) node[right] {$\mathbf{r}_2$};
                \pgfmathsetmacro{\scale}{1.5}
                \coordinate (V_vec) at (2 / \scale, 1.5 / \scale, 1 / \scale);
                \draw[->, thick, black] (O2) -- (V_vec) node[right] {$\mathbf{v}$};
                \coordinate (V_rej2) at (0, 1.5 / \scale, 1 / \scale);
                
                \draw[->, ultra thick, green!50!black] (O2) -- (V_rej2) 
                    node[left, text=green!50!black] {$\mathbf{v}_{rej}^{(2)}$};
                    
                \draw[dashed, red] (V_rej2) -- (V_vec) 
                    node[midway, above, xshift=0.3cm] {$\mathbf{v}_{\parallel}^{(2)}$};
                \draw (0.2,0,0) -- (0.2,-0.2,0) -- (0,-0.2,0); 
            \end{scope}
        \end{scope}

        \node[fit=(BoxTopLeft) (BoxBottomRight), draw=gray!50, rounded corners=10pt, inner sep=0pt, thick] {};
    \end{scope}

    \begin{scope}[shift={(11,1.35)}]
        \coordinate (BoxTopLeft) at (-0.4, 2.2);      
        \coordinate (BoxBottomRight) at (2.4, 0.5); 

        \begin{scope}[on background layer]
            \node[fit=(BoxTopLeft) (BoxBottomRight), boxstyle] {};
        \end{scope}

        \node[align=center, anchor=north] at (1, 2, 0) {
            \textbf{Inner Product} \\
            $\Bigg\langle$ 
            \tikz[baseline=-0.5ex, z={(-0.3,-0.3)}]{\draw[->, ultra thick, blue] (0,0,0) -- (2,0,1);} 
            $,$ 
            \tikz[baseline=-0.5ex, z={(-0.3,-0.3)}, yshift=-0.6cm]{\draw[->, ultra thick, green!50!black] (0,0,0) -- (0,1.5,1);} 
            $\Bigg\rangle \approx 0$
        };

        \node[fit=(BoxTopLeft) (BoxBottomRight), draw=gray!50, rounded corners=10pt, inner sep=0pt, thick] {};
    \end{scope}
    \end{tikzpicture}%
}
\caption{Visualization of the rejection process. The vector $\mathbf{v}$ is projected onto orthogonal complements of $\mathcal{C}_1$ and $\mathcal{C}_2$, resulting in nearly orthogonal rejection vectors. The more vectors in a filler, the less likelihood that all rejection vectors will align.}
\label{fig:rejection_viz}
\vspace{-15pt}
\end{figure*}
\section{Orthogonal Subspace Carving (OSC)}
Rather than assigning bindings to explicit coordinate slots, our approach will work by 
defining where information is \emph{not} allowed to go. Each context is 
associated with a geometric subspace that acts as a ``forbidden'' region. 
Bindings are then placed in the orthogonal complement, this is the space that remains 
after carving out the context.

{\bf High level Principle.} The reversal described above is key. Standard approaches mark storage locations 
explicitly: ``role 1 goes in dimension 7''. 
OSC instead says: ``this context occupies dimensions $\{7, 42\}$. So, all bindings 
must avoid them.'' 
The binding lives in the remaining $d-2$ 
dimensions, orthogonal to the context by design.
This exclusion ensures that bindings under different contexts occupy 
{\em disjoint subspaces} without requiring coordination, and separation is  
automatic. 
Structure emerges from what is carved 
away.

\vspace{-0.2cm}
\subsection{Representational Ingredients.} The architecture operates within a fixed-dimensional ambient space $\mathbb{R}^d$. 
Unlike conventional symbolic architectures (fillers as single 
vectors), we construct fillers as {\em structured objects} with multiple components.

\begin{definition}[OSC Filler]
Let $p$ be a positive integer denoting the order. The filler space $\mathcal{F}$ 
is defined as the Cartesian product of $p$ vector spaces (each of dimension $d$):
\begin{equation}
\mathcal{F} = (\mathbb{R}^d)^p
\end{equation}
A specific filler $\mathbf{f} \in \mathcal{F}$ is an ordered tuple of $p$ vectors:
\begin{equation}
\mathbf{f} = (v_1, v_2, \ldots, v_p)
\end{equation}
where each component vector $v_i \in \mathbb{R}^d$ for $i \in \{1, \ldots, p\}$.
\end{definition}

Contexts depart from conventional one-to-one role-vector mappings. Instead of 
associating each role with a single vector, we associate each context with a 
subspace that can accommodate multiple roles simultaneously.

\begin{definition}[Context]
A context $\mathcal{C}$ is associated with a subspace $S_{\mathcal{C}} \subset 
\mathbb{R}^d$ spanned by a set of $k$ mutually orthogonal directions 
$\{u_1, u_2, \ldots, u_k\}$.
\end{definition}

Orthogonality is required only among directions within a single 
context. Different contexts need not share any geometric relationship. This 
realizes the local versus global principle from \S\ref{sec:sec3}: structural precision 
within each context but independence across contexts.

\subsection{The Carving Operator}

To enforce the exclusion principle, we define a projection that removes 
any overlap with the context subspace.

\begin{definition}[Subspace Carving Operation]
Let $\mathcal{C}$ be a context with basis vectors $\{u_1, \ldots, u_k\}$. 
We stack these as rows to form the basis matrix $B_{\mathcal{C}} \in \mathbb{R}^{k \times d}$. 
The carving operator $P_{\mathcal{C}}^{\perp} \in \mathbb{R}^{d \times d}$ projects onto the null space of $B_{\mathcal{C}}$:
\begin{equation}
P_{\mathcal{C}}^{\perp} = I_d - B_{\mathcal{C}}^T B_{\mathcal{C}}
\end{equation}
\end{definition}

This operation removes any component of a vector that overlaps with the context's directions. For any vector $v \in \mathbb{R}^d$, the result $P_{\mathcal{C}}^{\perp} v$ is guaranteed to be orthogonal to all $u_i$, ensuring the processed vector lies entirely outside the context subspace $S_{\mathcal{C}}$ as seen in Figure \ref{fig:rejection_viz}. We provide motivation for the rejection in Appendix \ref{app:TPR_Cliffgen}.

\vspace{-0.2cm}
\subsection{Binding Operation}

To bind a filler to a context, we apply the carving operator to each component independently. Then, we fuse the processed components into a higher-order structure.

\begin{definition}[Context Binding]
For a filler $\mathbf{f} = (v_1, \ldots, v_p)$ and context $\mathcal{C}$, the bound object $T_{\text{bound}}$ is the order-$p$ tensor product of projected/normalized components:
\begin{equation}
T_{\text{bound}} = \bigotimes_{i=1}^{p} \left( \frac{P_{\mathcal{C}}^{\perp} v_i}
{\|P_{\mathcal{C}}^{\perp} v_i\|} \right)
\end{equation}
\end{definition}

By normalizing, the tensor magnitude reflects alignment rather than scale. Each $v_i$ is first projected away from context, then normalized. Then, it is combined via the tensor product. We get an order-$p$ tensor $T_{\text{bound}} \in (\mathbb{R}^d)^{\otimes p}$ that encodes the filler while residing fully in $S_{\mathcal{C}}$'s complement.

\vspace{-0.2cm}
\subsection{Superposition and Memory Formation}
The memory $\mathbf{M}$ is constructed by the superposition of such bound tensors using standard addition (like TPRs, VSAs):
\begin{equation}
    \mathbf{M} = \sum_{j} \mathbf{T}_{\text{bound}}^{(j)} 
\end{equation}
The enforcement of separation takes place entirely during the binding stage, not at storage. Because each $T_{\text{bound}}^{(j)}$ is orthogonal to its respective context subspace and contexts define disjoint subspaces, the bindings naturally separate in the memory space.This makes memory formation order-independent (similar to existing architectures). Arbitrary number of bindings coexist using simple superposition.

\subsection{Querying is Recognition}
In OSC, we support recognition, not recall. To query the memory, we construct a candidate binding and measure its similarity against the stored memory.

\begin{definition}[Recognition Score]
To test whether a candidate filler $\mathbf{g} = (v_1, \ldots, v_p)$ is associated with context $\mathcal{C}$, construct the candidate tensor $T_{\text{cand}}$ using the same binding mechanism:
\begin{equation}
T_{\text{cand}} = \bigotimes_{i=1}^{p} \left( \frac{P_{\mathcal{C}}^{\perp} v_i}
{\|P_{\mathcal{C}}^{\perp} v_i\|} \right)
\end{equation}
The recognition score is the Frobenius inner product:
\begin{equation}
\text{Score}(\mathbf{g}, \mathcal{C}) = \langle \mathbf{M}, T_{\text{cand}} \rangle_F
\end{equation}
\end{definition}

When the candidate matches a stored binding, the score is close to 1. When it does not match, the score is close to zero. The score yields a unit response for stored items and a zero-mean response for interference, with a variance that is suppressed as the tensor-order or dimension increases.
\begin{theorem}[Interference Scaling]
\label{thm:interference}
Let $\mathbf{M}$ be a memory superposition of $N$ bindings constructed from independent, isotropic random fillers. For a query matching a stored target, the retrieval score $S$ satisfies $\mathbb{E}[S] = 1$. The interference noise $I$ arising from the $N-1$ other bindings has zero mean, $\mathbb{E}[I] = 0$, and a standard deviation that scales as:
\begin{equation} 
\text{Std}(S) = \sqrt{\underbrace{\text{Var}(\text{Signal})}_{0} + \text{Var}(I)} = \sqrt{\frac{N - 1}{d^p}}
\end{equation}
\end{theorem}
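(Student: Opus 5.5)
The plan is to decompose the score into the self term plus cross terms and reduce every inner product between order-$p$ bound tensors to a product of $p$ vector inner products. For rank-one tensors, $\langle \bigotimes_i a_i, \bigotimes_i b_i\rangle_F = \prod_{i=1}^p \langle a_i, b_i\rangle$. Write $\hat v_i^{(j)} = P_{\mathcal{C}_j}^{\perp} v_i^{(j)}/\|P_{\mathcal{C}_j}^{\perp} v_i^{(j)}\|$ for the normalized carved components of binding $j$, and let binding $1$ be the target. The query tensor $T_{\text{cand}}$ then coincides with $T^{(1)}_{\text{bound}}$. Hence
\begin{equation*}
S = \langle \mathbf{M}, T_{\text{cand}}\rangle_F = \prod_{i=1}^p \|\hat v_i^{(1)}\|^2 + \sum_{j=2}^{N} \prod_{i=1}^p \langle \hat v_i^{(1)}, \hat v_i^{(j)}\rangle = 1 + I .
\end{equation*}
The signal is deterministically $1$ because the normalization makes each factor a unit vector, which gives $\text{Var}(\text{Signal}) = 0$.

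For the interference, I will use independence and isotropy. Condition on the target components $\hat v_i^{(1)}$. For $j \neq 1$, the filler components $v_i^{(j)}$ are independent isotropic vectors, and their normalized carvings $\hat v_i^{(j)}$ are independent across $i$ and $j$. By the symmetry $v \mapsto -v$, each $\hat v_i^{(j)}$ has zero mean. So each cross term $\prod_i \langle \hat v_i^{(1)}, \hat v_i^{(j)}\rangle$ has conditional mean $\prod_i \mathbb{E}\langle \hat v_i^{(1)}, \hat v_i^{(j)}\rangle = 0$, and therefore $\mathbb{E}[I]=0$.

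For the variance, distinct cross terms $j\neq j'$ are uncorrelated. Their product has expectation $\prod_i \mathbb{E}[\langle a,\hat v_i^{(j)}\rangle]\,\mathbb{E}[\langle a,\hat v_i^{(j')}\rangle]=0$ by independence and zero mean. Hence $\text{Var}(I) = \sum_{j\ge 2} \prod_{i=1}^p \mathbb{E}[\langle \hat v_i^{(1)}, \hat v_i^{(j)}\rangle^2]$.

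It remains to show that each squared overlap has expectation $1/d$. For a fixed unit vector $a$ and a uniformly random unit vector $u$ in $\mathbb{R}^d$, we have $\mathbb{E}[\langle a,u\rangle^2] = a^\top \mathbb{E}[uu^\top] a = \|a\|^2/d = 1/d$. Multiplying over the $p$ independent factors and summing over the $N-1$ bindings gives $\text{Var}(I) = (N-1)/d^p$, and hence $\text{Std}(S) = \sqrt{(N-1)/d^p}$.

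The main obstacle is the step $\mathbb{E}[\langle \hat v_i^{(1)}, \hat v_i^{(j)}\rangle^2]=1/d$ once carving is included. After projection, $\hat v_i^{(j)}$ is uniform on the unit sphere of the $(d-k_j)$-dimensional complement of $S_{\mathcal{C}_j}$, not of $\mathbb{R}^d$. So $\mathbb{E}[\hat v\hat v^\top] = P^\perp_{\mathcal{C}_j}/(d-k_j)$, and the exact value is $a^\top P^\perp_{\mathcal{C}_j} a/(d-k_j)$. If the two bindings share a context, this equals $1/(d-k)$. If the context subspaces are independent random ones, averaging over them gives exactly $1/d$ by isotropy. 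I will therefore state the result in the intended regime, with contexts drawn isotropically or with $k \ll d$, where this factor is $1/d$ exactly or up to a $(1+O(k/d))$ correction. I expect this modelling choice, rather than the algebra, to be what needs care. In the same-context case the same argument goes through with $d$ replaced by $d-k$.
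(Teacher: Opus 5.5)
Your route is the paper's own proof (Appendix~B) almost step for step. You use the same split $S = 1 + I$ with a deterministic unit signal and the same factorization of $\langle T^{(1)}, T^{(j)}\rangle_F$ into $p$ vector overlaps. Your computation $a^\top\mathbb{E}[P^\perp_{\mathcal{C}_j}]a/(d-k) = 1/d$ is the paper's Claim with the target conditioned on first. Your zero-mean and uncorrelatedness steps are sound: flipping the sign of the single vector $v_1^{(j)}$ flips $Z_j = \prod_i\langle \hat v_i^{(1)},\hat v_i^{(j)}\rangle$, and conditioning on the target makes distinct cross terms independent with mean zero. You are also right that the cross terms are only uncorrelated, not independent as the paper says.

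The step that fails is ``multiplying over the $p$ independent factors.'' All $p$ components of binding $j$ are carved by the same random $Q_j = P^\perp_{\mathcal{C}_j}$, and all target components by the same $Q_1 = P^\perp_{\mathcal{C}_1}$. So the overlaps are independent across $i$ only conditionally on the contexts. Done correctly,
\begin{equation*}
\mathbb{E}\bigl[Z_j^2 \mid \mathcal{C}_1, \mathcal{C}_j\bigr] = \Bigl(\frac{\mathrm{tr}(Q_1 Q_j)}{(d-k)^2}\Bigr)^{p}, \qquad \mathbb{E}[Z_j^2] = \frac{\mathbb{E}\bigl[\mathrm{tr}(Q_1 Q_j)^p\bigr]}{(d-k)^{2p}} \ge \Bigl(\frac{\mathbb{E}[\mathrm{tr}(Q_1Q_j)]}{(d-k)^2}\Bigr)^p = \frac{1}{d^p}.
\end{equation*}
This Jensen inequality is strict for $p \ge 2$ and $0<k<d$, because $\mathrm{tr}(Q_1Q_j)$ (the sum of squared cosines of the principal angles) is not deterministic. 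For $p=2$ one gets exactly $\mathbb{E}[Z_j^2] = d^{-2}\bigl(1 + \tfrac{2k^2}{(d-k)^2(d-1)(d+2)}\bigr)$. For $d=2$, $k=1$ this is $3/8$ rather than $1/4$.

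So your argument, like the paper's (which asserts ``since the components $i=1\ldots p$ are independent''), gives $(N-1)/d^p$ only to leading order. The relative error is of order $p^2/d$ in the paper's operating regime $d-k\approx\sqrt d$, and negligible for $k\ll d$. Your hedge does not catch this. You assert the single factor $1/d$ is exact under isotropic contexts, and that is true, but the error comes from correlation \emph{across} the $p$ factors.

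To get a true statement, you could do one of the following:
\begin{itemize}
\item condition on the contexts, giving $\mathrm{Var}(I \mid \text{contexts}) = \sum_{j\ne 1}\bigl(\mathrm{tr}(Q_1Q_j)/(d-k)^2\bigr)^p$;
\item state $(N-1)/d^p$ as the leading term together with this Jensen correction;
\item use your shared fixed-context case, where independence across $i$ is genuine and $\mathrm{Var}(I) = (N-1)/(d-k)^p$ holds exactly.
\end{itemize}
As written, the equality is exact only for $p=1$ or $k=0$.
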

\begin{proof}
See Appendix \ref{app:retrieval_stats} for the full derivation.
\end{proof}

\textbf{Retrieval via vocabulary search.}
To retrieve the filler associated with a context, we evaluate all candidates from the vocabulary. For each filler in the codebook, we can construct its binding with the query context and calculate the score. Then, we select the candidate with the highest score. This trades algebraic unbinding for geometric interference control, and is favorable in high-superposition regimes.

\subsection{Loss of Exact Unbinding}
A structural consequence of this design is that exact algebraic unbinding is impossible. Because $\mathbf{P}_{\mathcal{C}}^{\perp}$ is a projection matrix, it is singular. Information in the subspace spanned by $\mathcal{C}$ is discarded. So, the memory allows  queries of the form \textit{``Is X stored here?''} but not \textit{``What is stored here?''} without iterating through the vocabulary. However, as discussed, even architectures with algebraic unbinding (VSAs, TPRs) rely on cleanup memories that perform vocabulary search. Since vocabulary search is unavoidable, the inability to unbind exactly is not a functional disadvantage. In fact, for VSAs where the unbinding operation is the adjoint of binding (HRR, MAP, HLB) retrieval via recognition yields mathematically identical rankings to standard unbinding-based retrieval, since the adjoint property guarantees the argmax over the codebook is equal (Appendix~\ref{app:disc_vs_gen}).

We call the formulation \textbf{Orthogonal Subspace Carving} (OSC). It enforces structural distinctions through geometric exclusion:  contexts carve forbidden subspaces. This shifts the goal from reconstruction to discriminative recognition.

\subsection{Connection to Clifford algebra} 
TPR and \acr connect naturally to Clifford algebra, which has attracted growing interest in machine learning, e.g., rotor-based linear layers \citep{pence2025composing}, general Clifford network layers \citep{Ruheetal2023b}, and characterizations of network weights \citep{pilanci2024complexity}. We show TPR admits a strict generalization within Clifford algebra. In the Clifford construction, instead of requiring roles and fillers to be supported on disjoint basis vectors, which recovers TPR, they need only be orthogonal to each other for exact recovery of fillers, reducing the tensor memory from $d^{\lvert C\rvert + 1}$ to $\binom{d}{\lvert C\rvert + 1}$.

\acr admits a natural description in this same framework. The context is encoded as a single algebraic object representing the subspace itself, rather than as a matrix describing it. The carving operator $P_\mathcal{C}^\perp$ then becomes a geometric rejection that strips away whatever part of a vector lies inside this subspace, leaving only the component orthogonal to it. Fillers are likewise lifted from tuples of vectors to subspaces in their own right, and binding combines a filler subspace with the orthogonal complement of the context. The same orthogonality between fillers and roles that makes the TPR generalization exact is the geometric relationship that \acr enforces through subspace carving. See Appendix~\ref{app:TPR_Cliffgen} for the TPR Clifford algebra generalization and the \acr Clifford equivalent.

\section{Scalability}
\label{sec:sec5}



We introduced OSC's binding mechanism, which relies on contexts defining 
forbidden subspaces. In this section, we talk about the scalability of roles and fillers.


\subsection{Procedural Context Generation.} Contexts no longer need a global lookup table and can instead depend solely on the abstract roles used to define them, i.e., they can be generated procedurally on demand. The mechanism is simple:  combine the symbolic labels of all constituent roles (e.g., ``subject'', ``sentence\_5'', ``parse\_tree'') into a unique string, hash this string to produce a random seed, and use the seed to generate the context basis (mutually orthogonal basis vectors $\oscbas$). More precisely, for a context $\mathcal{C}$ defined by role labels 
$\{\ell_1, \ldots, \ell_k\}$:
\begin{compactenum}[\bfseries Step 1]
\item Concatenate labels: $s = \text{concat}(\ell_1, \ldots, \ell_k)$
\item Hash to seed: $\text{seed} = \text{hash}(s)$ (e.g., SHA-256)
\item Generate random matrix: $G \in \mathbb{R}^{k \times d}$ using seed
\item Orthonormalize: $B_{\mathcal{C}} = \text{QR}(G)$
\end{compactenum}
\textbf{Context independence of Memory Footprint.} Because this process is deterministic, the exact orthogonal subspace can be recreated whenever the context is defined. There is no need to store the basis vectors themselves, decoupling the number of representable contexts from the memory constraints. Increasing complexity (associating more roles with a filler) costs nothing in terms of memory. This is in sharp contrast to TPRs, where the dimensionality of the representation scales exponentially.

\begin{figure*}[b]
    \centering
    \begin{subfigure}[b]{0.47\textwidth}
        \centering
        \includegraphics[width=\textwidth, trim=0.40in 5.88in 2.15in 0.45in, clip]{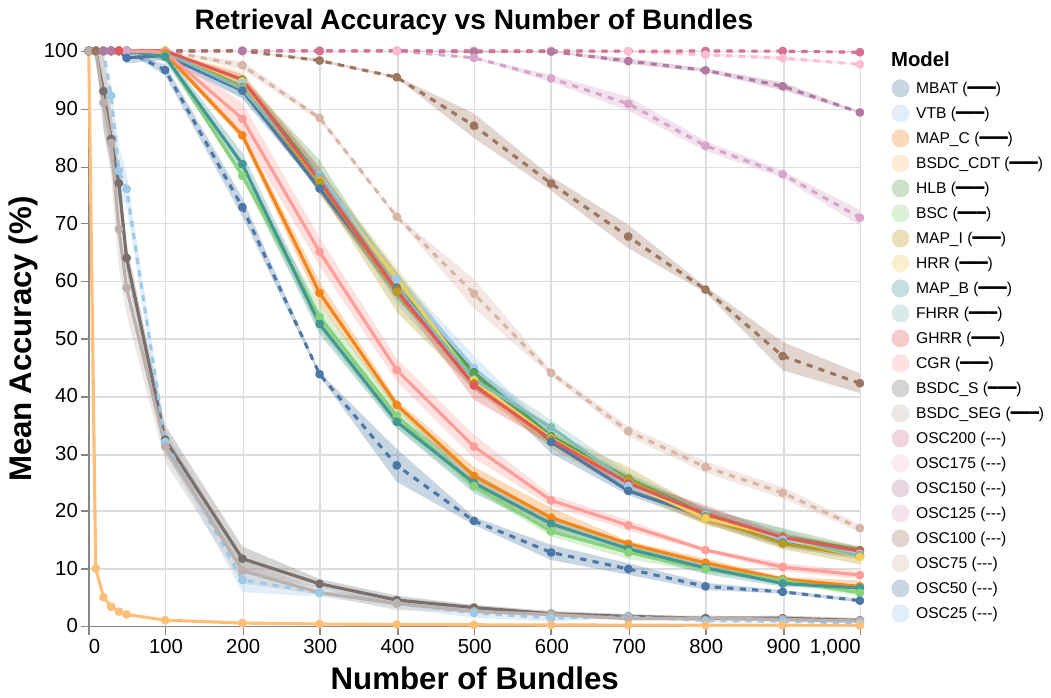}
        \label{fig:retrieval_acc_4096}
    \end{subfigure}
    \hfill 
    \begin{subfigure}[b]{0.525\textwidth}
        \centering
        \includegraphics[width=\textwidth, trim=0.70in 5.88in 1.14in 0.45in, clip]{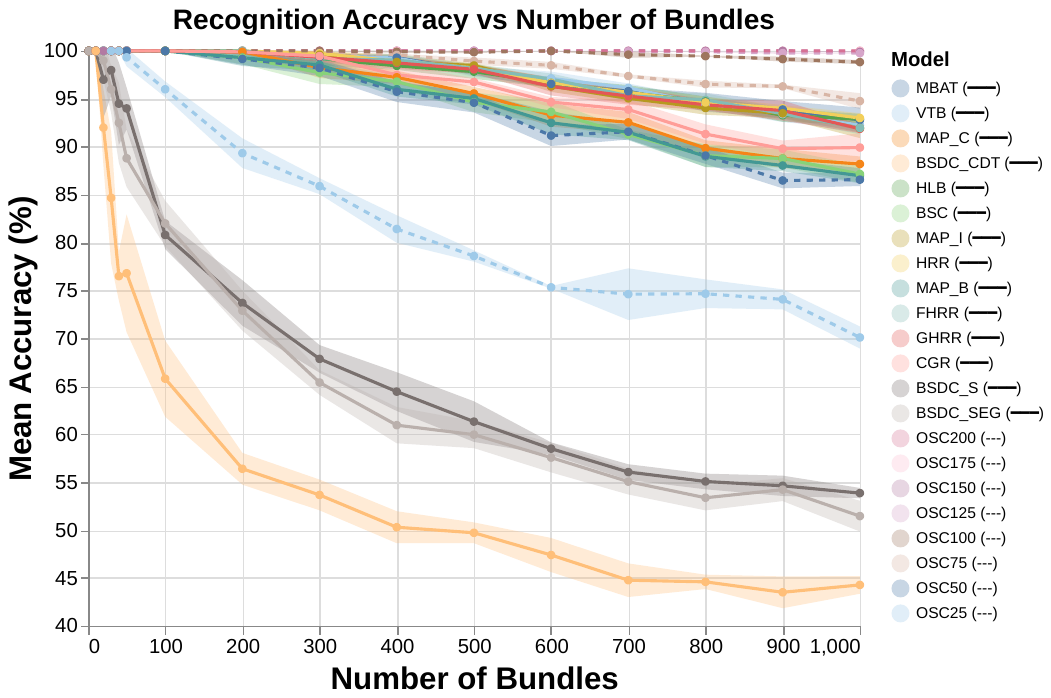}
        \label{fig:recog_acc_4096}
    \end{subfigure}
    
    \caption{Performance comparison of $14$ VSAs at $d=4096$ and OSC across $d=25,50,75,100,125,150,175,$ and $200$ for $p=2$ at increasing bundle counts with unique fillers and role depth $1$. \acr outperforms all VSAs on both retrieval and recognition tasks. Note the difference in $Y$ axes between graphs. Standard deviation is denoted by the shaded regions. We provide comparisons for other role depths and VSA dimensions in Appendix \ref{app:experiments}.}
    \label{fig:allVSAs_4096}
\end{figure*}

\subsection{Filler Representation Efficiency}

The multi-component filler structure provides a second advantage: 
vocabulary cost grows sub-linearly with memory capacity.

\paragraph{VSA.} In VSAs, fillers and memory occupy the same $d$-dimensional 
space. A filler is $d$ parameters, so the cost is:
\begin{equation}
N_{\text{VSA}} = d + |\mathcal{V}| \cdot d = d(1 + |\mathcal{V}|)
\end{equation}
Each filler costs as many parameters as the entire memory.

\paragraph{OSC Advantage.} OSC decouples filler dimension from memory dimension. 
The memory tensor has effective dimension $d_1 = d^p$, but fillers remain in 
$(\mathbb{R}^d)^p$, requiring only $p \cdot d = p \cdot d_1^{1/p}$ parameters each:
\begin{equation}
N_{\text{OSC}} = d_1 + |\mathcal{V}| \cdot p \cdot d_1^{1/p}
\end{equation}
Vocabulary cost grows as $\mathcal{O}(d_1^{1/p})$ which is sub-linear in memory capacity. We show the compounding effect this has on total memory in \S\ref{sec:experiments}.
\section{Experiments}
\label{sec:experiments}

\textbf{Goals.}
We evaluate the scalability and efficiency of \acr through the standard synthetic benchmarks and model integration tasks. The main goals are:
\begin{inparaenum}[\bfseries (G-1)]
\item \textit{Juxtapose memory capacity with total storage efficiency}, demonstrating that \acr performs better in terms of total relative size, maintaining a drastically smaller footprint as the number of stored bundles increases.
\item Validate that the \textit{construction is learnable} by integrating \acr into models for common VSA applications.
\end{inparaenum}

\textbf{Scope of Baselines.} We exclude TPRs from most of our experiments for structural reasons. Indeed, the entire class of VSAs exists because TPRs become computationally intractable for high context sizes. In contrast, the performance of \acr is invariant to context complexity but sensitive to the degree of superpositon. We compare \acr to TPRs where TPRs remain feasible at a role depth of $1$.

We evaluate \acr against $14$ different VSAs, most examined in two recent survey papers  \citep{Schlegel_2021, Kleyko_2022}. The classical baselines are Holographic Reduced Representations (HRR) \citep{plate1995}, Fourier-HRR (FHRR) \citep{plate2003holographicfourier}, Multiply-Add-Permute variants (MAP-I, MAP-C, MAP-B) \citep{Gayler1998MultiplicativeBR}, Binary Spatter Codes (BSC) \citep{Kanerva1996BinarySO}, and Binary Sparse Distributed Representation variants (BSDC-S, BSDC-CDT) \citep{rachkovskij2002representation}. The more modern baselines are: BSDC-SEG \citep{laiho2015high}, Matrix Binding of Additive Terms (MBAT) \citep{gallant2015representingobjectsrelationssequences}, Vector-Derived Transformation Binding (VTB) \citep{gosmann2019}, Cyclic Group Representation (CGR) \citep{yu2022understanding}, Hadamard-derived Linear Binding (HLB) \citep{alam2024walshhadamardderivedlinear}, and Generalized-HRR (GHRR) \citep{yeung2024generalizedholographicreducedrepresentations}.

\begin{table}[h]
\centering
\resizebox{\columnwidth}{!}{%
\begin{tabular}{lccr}
\toprule
\textbf{Model} & \textbf{Retrieval Acc.} & \textbf{Recognition Acc.} & \textbf{Total Memory} \\
\midrule
Best VSA ($d=4096$) & $13.18\pm.58$ & $93.00\pm.51$ & $4100096$ \\
Best VSA ($d=8100$) & $35.88\pm.40$ & $98.04\pm.26$ & $8108100$ \\
Best \acr ($d=200$)  & $\mathbf{99.77\pm.09}$ & $\mathbf{100\pm.00}$ & $440000$ \\
\bottomrule
\end{tabular}%
}
\vspace{0.2cm}
\caption{Accompanying table to Figure \ref{fig:allVSAs_4096}. Retrieval and recognition accuracy for the best-performing VSA at $2$ different dimensions and \acr at $1000$ bundles. With less memory, \acr outperforms the best VSA on retrieval and recognition tasks.}
\label{tab:allVSA_small_table}
\vspace{-0.3cm}
\end{table}

\textbf{Why Synthetic Evaluation?} Our core scalability experiments rely on synthetic benchmarks, following the standard evaluation for novel VSA architectures \citep{Schlegel_2021, gosmann2019, alam2024walshhadamardderivedlinear}. This design choice is grounded in practice: in common VSA use cases, vectors are often fixed randomly generated codebooks (not learned). Thus, these simulations are not just approximate theoretic limits but reflect expected performance.

\subsection{Memory Capacity and Scalability}
\begin{figure}[b!]
    \centering
    \includegraphics[width=\columnwidth, trim=0.45in 7.0in 1.2in .45in, clip]{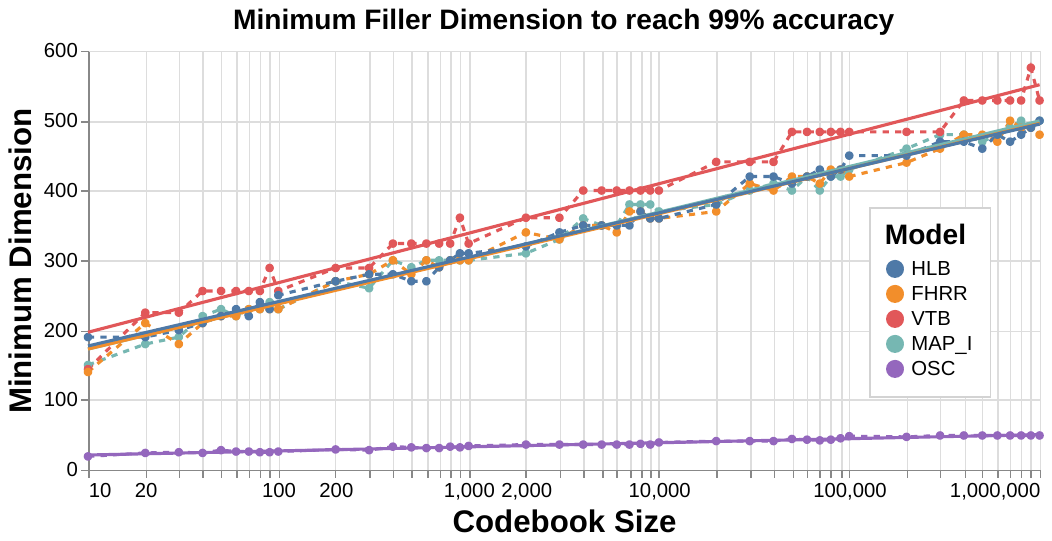} 
    \caption{Minimum filler dimension required to achieve $99\%$ accuracy. A dimension was selected only if mean accuracy across 10 trials exceeded $99\%$ at that dimension but fell below $99\%$ at the next lowest dimension. The dotted line tracks the actual dimension points, while the solid line represents the best fit. \acr exhibits the same favorable logarithmic scaling with codebook size as VSAs.}
    \label{fig:itemmem_main}
\end{figure}

\textbf{Experimental Setup.} 
We generate unique roles for every position in the memory. In the memory analysis, we exclude role storage costs for VSAs since this varies by application - from just $1$ or $2$ roles to thousands. While this underestimates the true VSA footprint, the clear difference between VSAs and \acr remains. We assess recognition and recall.

\begin{figure}[htbp]
    \centering
    \includegraphics[width=\columnwidth, trim=0.45in 6.0in 1.0in 0.45in, clip]{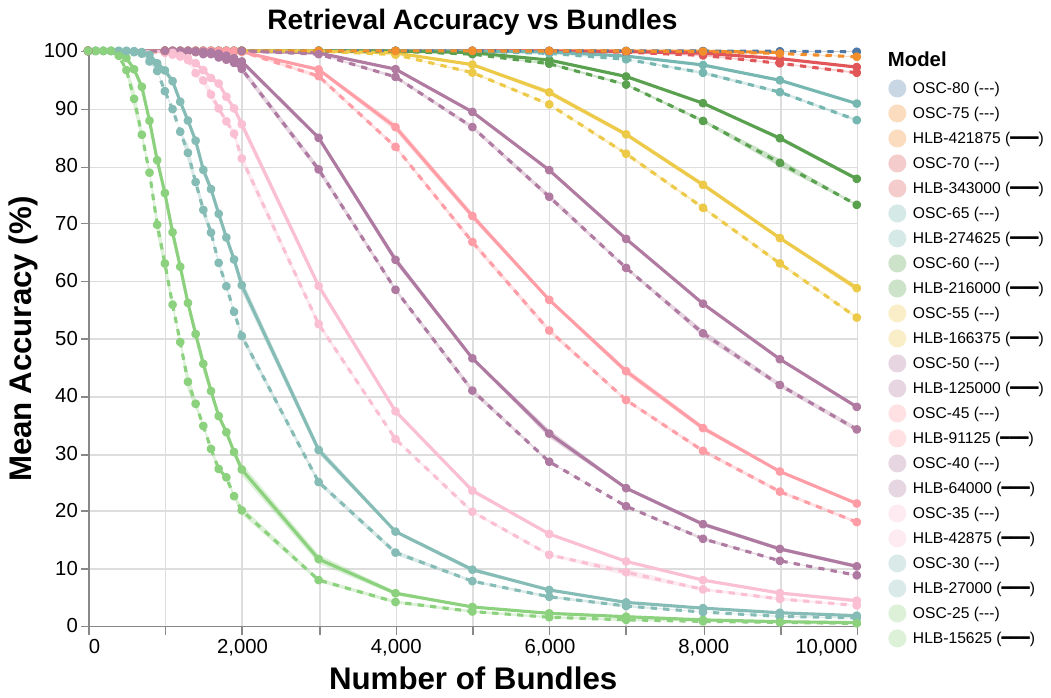}
    
    \vspace{0.3cm}
    
    \resizebox{\columnwidth}{!}{%
    \begin{tabular}{lccr}
    \toprule
    \textbf{Model} & \textbf{Retrieval Acc.} & \textbf{Recognition Acc.} & \textbf{Total Memory} \\
    \midrule
    Largest VSA ($d=421875$)  & OOM & OOM & $4219171875$ \\
    Best VSA ($d=343000$) & $97.19\pm.10$ & $\mathbf{100\pm.00}$ & $3430343000$ \\
    Worst VSA ($d=15625$) & $0.56\pm.09$ & $80.95\pm.22$ & $156265625$ \\
    \acr ($d=80$) & $\mathbf{99.85\pm.04}$ & $\mathbf{100\pm.00}$ & $2912000$ \\
    \bottomrule
    \end{tabular}%
    }
    
    \caption{Performance of HLB at varying dimensions against \acr for $p=3$. Identically colored runs share the same superposition memory size (not total memory). VSAs are slightly superior than \acr in memory capacity, but \acr's sub-linear scaling requires the VSA to have $1178\times$ \acr's total memory to achieve comparable retrieval accuracy. Table values correspond to $10000$ bundles. Standard deviations (shaded regions) are present but negligible.}
    \label{fig:memorycomp3_ret}

    \vspace{-0.5cm}
\end{figure}

\textbf{Results.}
Due to the sub-linear scaling of the filler vocabulary, \acr achieves orders-of-magnitude improvements in storage efficiency over VSAs, particularly in high-superposition regimes. While standard VSAs typically operate at $d=4096$ or $d=8192$ (with VTB necessitating $d=8100$), Figure \ref{fig:allVSAs_4096}  and Table \ref{tab:allVSA_small_table} demonstrates that OSC with $d=200$ and $p=2$ achieves $99.77\%$ retrieval accuracy for $1000$ bundles. This corresponds to a $9.3\times$ storage reduction compared to the best VSA at $d=4096$ ($13.18\%$ accuracy) and an $18.4\times$ reduction compared to the baseline at $d=8100$ ($35.88\%$ accuracy).

The efficiency gap widens significantly at higher levels of superposition. We selected HLB for this comparison, as it demonstrated the highest performance among VSAs in Figure \ref{fig:allVSAs_4096}. As shown in Figure \ref{fig:memorycomp3_ret}, \acr at $d=80$ and $p=3$ maintains $99.85\%$ accuracy for $10000$ bundles. In contrast, maximizing HLB performance required scaling to $d=343000$ to achieve $97.19\%$ accuracy, resulting in a $1,178\times$ larger parameter footprint. While VSAs exhibit slightly higher capacity when holding memory size constant, this advantage is theoretical rather than practical. Even when HLB is reduced to $d=15625$, yielding $0.56\%$ accuracy, it remains $53.7\times$ larger than \acr. Due to the sub-linear scaling of the filler vocabulary, \acr's storage efficiency is orders of magnitude smaller in practice, directly supporting \textbf{G-1}.

\begin{table}[h]
\centering
\resizebox{\columnwidth}{!}{%
\begin{tabular}{rcccccc}
\toprule
& \multicolumn{2}{c}{\textbf{TPR}} & \multicolumn{2}{c}{\textbf{\acr ($p=3$)}} & \textbf{Compression Ratio} \\
\midrule
$N$ & $d$ & Total Params & $d$ & Total Params & TPR/\acr \\
\midrule
50      & 44  & 4K     & 13 & 4K    & 1.00$\times$ \\
100     & 61  & 10K    & 16 & 9K    & 1.10$\times$ \\
500     & 137 & 87K    & 27 & 60K   & 1.45$\times$ \\
1{,}000 & 197 & 236K   & 34 & 141K  & 1.67$\times$ \\
5{,}000 & 451 & 2.5M   & 59 & 1.1M  & 2.25$\times$ \\
10{,}000 & 649 & 6.9M  & 76 & 2.7M  & 2.55$\times$ \\
\bottomrule
\end{tabular}%
}
\vspace{0.2cm}
\caption{Minimum filler dimension required to achieve 99\% accuracy. At $N=10,000$, \acr has $76\cdot 3 = 228$ parameters per filler, while TPR uses $649$ parameters, leading to the \acr's $2.55\times$ less parameter usage.}
\label{tab:tpr}
\vspace{-0.7cm}
\end{table}

\textbf{Influence of Codebook Size.}
In previous experiments, the codebook size was fixed to match the number of bindings. However, retrieval performance is inherently dependent on the total size of the codebook, even if the additional fillers are not currently stored in memory. This occurs because a larger search space increases the probability that a random filler will, by chance, exhibit higher similarity to the noisy retrieved vector than the true target. This relationship was originally characterized by \citet{plate1995} and extended to other VSAs by \citet{Schlegel_2021}, who showed that codebook capacity scales exponentially with vector dimension. In Figure \ref{fig:itemmem_main}, we show that \acr shares this favorable scaling property: as the codebook size increases while superposition remains constant, the required filler dimension, that which contributes most to total memory, needs to grow only logarithmically to maintain accuracy. The number of bindings is fixed at $N=10$. We provide further experiments in Appendix \ref{app:experiments}.

\textbf{Comparison to TPRs}
We compare \acr against TPRs where they remain feasible at a role depth of $1$. Following the set-up in previous codebook size experiment, we find the minimum dimensions for TPR and \acr to reach 99\% accuracy across varying levels of superposition. Table~\ref{tab:tpr} shows that OSC achieves this with fewer parameters after a small amount of superposition, and the compression ratio increasingly favors OSC at scale as the filler dimension begins to dominate parameter count. Note TPR is only exact when the dimension is large enough for orthogonality.

\textbf{Inference Speed.}
Each VSA has distinct binding and unbinding mechanisms, and the effort invested in low-level tuning would be uneven across baselines, which makes a comprehensive speed comparison difficult. We benchmark against HLB, the best-performing VSA from Figure~\ref{fig:allVSAs_4096}, whose binding and unbinding are elementwise multiplication and division, the simplest amongst all VSAs benchmarked against.

We report retrieval timings with superposition memory held constant and amortized generation costs excluded. Because HLB's elementwise operations are simple, it is faster at small enough dimensions. However, as the representation scales through higher superposition, HLB's memory footprint dominates. At that point, even though \acr's per-element operations are more expensive, \acr becomes faster because the bottleneck is memory bandwidth, not compute. Table~\ref{tab:speed_benchmarks} reports wall-clock retrieval times and parameter counts across three configurations. See Appendix~\ref{app:flops} for a FLOPs comparison.

\subsection{Use for Extreme Multi-label Learning (XML)}
We show that \acr, like other VSAs, is learnable in models. To validate this, we apply \acr to the task of Extreme Multi-label Classification (XML).

\textbf{Problem Definition.}
XML is a classification setting where the output space consists of a massive number of classes ($L \ge 100000$), but the number of positive labels for any single input is sparse ($K \approx 10$). Standard neural network approaches (e.g., a final linear layer of size $d \times L$) scale poorly due to the size of the output space.

\textbf{Neuro-symbolic VSA Approach.}
\citet{ganesan2021learning} proposed a neuro-symbolic approach to solve XML by replacing the computationally expensive final layer with VSA operations. Instead of learning a classifier for every class, each class $k$ is assigned a fixed, random atomic vector $c_k$. The model is trained to regress a single superposition vector $S$ that represents the set of all active labels for the input.

\begin{table}[h]
\centering
\resizebox{\columnwidth}{!}{%
\begin{tabular}{llrrrrrr}
\toprule
& & \multicolumn{3}{c}{\textbf{Speed (ms)}} & \multicolumn{3}{c}{\textbf{Parameter Count}} \\
\cmidrule(lr){3-5} \cmidrule(lr){6-8}
& $N$ & $100$ & $1{,}000$ & $10{,}000$ & $100$ & $1{,}000$ & $10{,}000$ \\
\midrule
\multirow{3}{*}{\textbf{B1}}
& HLB         & $\mathbf{0.068}$ & $\mathbf{0.071}$ & $0.180$ & $414K$ & $4.1M$ & $41.0M$ \\
& \acr        & $0.149$ & $0.140$ & $\mathbf{0.156}$ & $\mathbf{17}K$  & $\mathbf{132}K$ & $\mathbf{1.3}M$  \\
& HLB/\acr    & $0.46\times$ & $0.50\times$ & $1.15\times$ & $24.5\times$ & $31.0\times$ & $31.9\times$ \\
\midrule
\multirow{3}{*}{\textbf{B2}}
& HLB         & $\mathbf{0.071}$ & $\mathbf{0.196}$ & $1.189$ & $4.0M$ & $40.0M$ & $400M$ \\
& \acr        & $0.155$ & $0.203$ & $\mathbf{0.306}$ & $\mathbf{80}K$  & $\mathbf{440}K$  & $\mathbf{4.0}M$   \\
& HLB/\acr    & $0.46\times$ & $0.97\times$ & $3.88\times$ & $50.5\times$ & $91.0\times$ & $99.0\times$ \\
\midrule
\multirow{3}{*}{\textbf{B3}}
& HLB         & $\mathbf{0.261}$ & $1.345$ & OOM   & $42.6M$ & $422M$ & OOM   \\
& \acr        & $0.384$ & $\mathbf{0.407}$ & $\mathbf{1.252}$ & $\mathbf{444}K$  & $\mathbf{647}K$   & $\mathbf{2.7}M$  \\
& HLB/\acr    & $0.68\times$ & $3.31\times$ & NA & $95.9\times$ & $653\times$ & NA \\
\bottomrule
\end{tabular}%
}
\vspace{0.2cm}
\caption{Wall-clock retrieval time and parameter count for HLB vs.~\acr across three benchmarks. \textbf{B1}: \acr ($d=64, p=2$) vs HLB ($d=4{,}096$). \textbf{B2}: \acr ($d=200, p=2$) vs HLB ($d=40{,}000$). \textbf{B3}: \acr ($d=75, p=3$) vs HLB ($d=421{,}875$). $N$ denotes the number of superposed bindings. A ratio $> 1$ indicates \acr is faster or uses less memory.}
\label{tab:speed_benchmarks}
\vspace{-0.4cm}
\end{table}

\textbf{Efficient Training via Linearity.}
A naive superposition of all $L$ classes would remain expensive. However, by defining two special role vectors—$p$ (``present'') and $m$ (``missing''), we can exploit the linearity of VSAs to shift the compute complexity from the total number of classes $O(L)$ to the number of active classes $O(K)$.

The model is trained to give a target vector $S$  as:
\begin{equation}
    S = \underbrace{p \otimes \left( \sum_{i \in Y} c_i \right)}_{\text{Active Labels}} + \underbrace{m \otimes \left( \sum_{j \notin Y} c_j \right)}_{\text{Inactive Labels}}
\end{equation}
Calculating the second term (inactive labels) is normally expensive. However, because the sum of \textit{all} class vectors $A = \sum_{k=1}^{L} c_k$ is constant and pre-computable, we can rewrite the target using the complement of the active set:
\begin{equation}
    S = p \otimes \left( \sum_{i \in Y} c_i \right) + m \otimes \left( A - \sum_{i \in Y} c_i \right)
\end{equation}
This formulation allows the target to be computed in $O(K)$ time. The network is then trained using a cosine-similarity loss that encourages the ``present" component of the output to align with the active class vectors, and the ``missing" component to differ.

\begin{table*}[t]
\centering
\resizebox{\textwidth}{!}{\footnotesize%
\begin{tabular}{lcccccccc}
\toprule
\textsc{Dataset} & \multicolumn{2}{c}{\textsc{Bibtex}} & \multicolumn{2}{c}{\textsc{Delicious}} & \multicolumn{2}{c}{\textsc{Mediamill}} & \multicolumn{2}{c}{\textsc{Eurlex-4K}} \\
\cmidrule(lr){1-1} \cmidrule(lr){2-3} \cmidrule(lr){4-5} \cmidrule(lr){6-7} \cmidrule(lr){8-9}
\textsc{Metrics} & nDCG & PSnDCG & nDCG & PSnDCG & nDCG & PSnDCG & nDCG & PSnDCG \\
\midrule
VTB      & \textbf{58.44$\pm$0.28} & \textbf{52.46$\pm$0.34} & \underline{60.82$\pm$0.30} & \underline{35.13$\pm$0.27} & \textbf{73.43$\pm$0.47} & \textbf{64.31$\pm$0.42} & 51.53$\pm$1.24 & 29.61$\pm$0.81 \\
MAP      & 55.25$\pm$0.69 & 49.22$\pm$0.70 & 59.16$\pm$0.51 & 34.01$\pm$0.24 & 72.63$\pm$0.48 & 63.59$\pm$0.43 & 52.33$\pm$0.81 & 30.19$\pm$0.55 \\
HLB      & 55.56$\pm$0.49 & 49.49$\pm$0.55 & 59.49$\pm$0.42 & 34.29$\pm$0.16 & 73.31$\pm$0.20 & \underline{64.13$\pm$0.21} & \underline{52.54$\pm$0.93} & \underline{30.24$\pm$0.54} \\
OSC      & \underline{57.08$\pm$0.30} & \underline{51.63$\pm$0.32} & \textbf{61.21$\pm$0.15} & \textbf{35.46$\pm$0.12} & \underline{73.40$\pm$0.41} & 63.99$\pm$0.38 & \textbf{59.70$\pm$0.09} & \textbf{35.77$\pm$0.07} \\
\bottomrule
\\
\toprule
\textsc{Dataset} & \multicolumn{2}{c}{\textsc{Eurlex-4.3K}} & \multicolumn{2}{c}{\textsc{Wiki10-31K}} & \multicolumn{2}{c}{\textsc{Amazon-13K}} & \multicolumn{2}{c}{\textsc{Delicious-200K}} \\
\cmidrule(lr){1-1} \cmidrule(lr){2-3} \cmidrule(lr){4-5} \cmidrule(lr){6-7} \cmidrule(lr){8-9}
\textsc{Metrics} & nDCG & PSnDCG & nDCG & PSnDCG & nDCG & PSnDCG & nDCG & PSnDCG \\
\midrule
VTB      & \textbf{75.91$\pm$0.39} & \textbf{53.99$\pm$0.41} & \underline{63.02$\pm$0.15} & 10.14$\pm$0.07 & 77.28$\pm$0.15 & 55.21$\pm$0.13 & \underline{38.27$\pm$0.10} & \underline{7.67$\pm$0.03} \\
MAP      & 68.42$\pm$1.45 & 45.10$\pm$0.99 & 61.67$\pm$0.38 & 10.26$\pm$0.11 & 75.08$\pm$0.14 & 53.48$\pm$0.11 & 34.01$\pm$0.15 & 6.81$\pm$0.03 \\
HLB      & 71.31$\pm$0.34 & \underline{47.37$\pm$0.32} & \textbf{63.88$\pm$0.50} & \underline{10.45$\pm$0.14} & \underline{78.01$\pm$0.14} & \underline{55.84$\pm$0.12} & \textbf{38.80$\pm$0.46} & \textbf{7.78$\pm$0.09} \\
OSC      & \underline{71.59$\pm$0.13} & 46.15$\pm$0.13 & 62.73$\pm$0.14 & \textbf{10.82$\pm$0.05} & \textbf{78.91$\pm$0.11} & \textbf{56.62$\pm$0.11} & 37.28$\pm$0.08 & 7.34$\pm$0.04 \\
\bottomrule
\end{tabular}
}
\vspace{3pt}
\caption{Retrieval performance (nDCG@5 $(\uparrow)$ and PSnDCG@5 $(\uparrow)$) of VTB, MAP, HLB, and \acr across eight datasets. \textbf{Bold} indicates the best, while \underline{underlining} denotes the second best.  \acr ranked first in $43.75\%$ of cases, surpassing VTB ($37.5\%$) and HLB ($18.75\%$). In terms of consistency, \acr and VTB both placed in the top two $68.75\%$ of the time, followed by HLB at $62.5\%$. MAP did not rank in the top two. Architecture and hyperparameters are detailed in Appendix \ref{app:hyperparameters}.}
\label{table:XML}
\vspace{-22pt}
\end{table*}

\textbf{Results.}
Similar to \citep{alam2024walshhadamardderivedlinear}, we check \acr on 8 benchmark XML datasets \citep{Bhatia16} using normalized discounted cumulative gain (nDCG) and propensity-scored (PS) based nDCG (PSnDCG) as suggested by \citep{jain2016extreme}. Table \ref{table:XML} shows that \acr learns effectively here, yielding performance competitive with MAP, HLB, and VTB. This confirms that our binding mechanism preserves learnability in gradient-based systems, directly supporting \textbf{G-2}.

\section{Related Works}
\label{sec:related_works}
VSAs and TPRs are extensively covered in \S\ref{sec:intro} and \S$\ref{sec:prelim}$.

\textbf{Matrix Memories and Outer Product Storage.}
Distributed associative memories based on outer product storage emerged independently in the early 1970s \citep{kohonen2009correlation,ANDERSON1972197,nakano2007associatron}. The pseudo-inverse extension, Optimal Linear Associative Memory (OLAM) \citep{kohonen1973representation}, replaces the correlation sum with the Moore-Penrose solution, achieving perfect retrieval up to the dimensionality limit. Hopfield networks \citep{hopfield1982neural} popularized this framework by introducing energy-based dynamics and stability analysis \citep{mceliece2003capacity,baldi1987number}. Modern work on dense associative memories \citep{krotov2016dense,demircigil2017model} showed that higher-order interactions enable exponential storage capacity, connecting classical memory models to deep learning \citep{krotov2018dense}. Recent analyses characterize capacity under various conditions \citep{lucibello2024exponential,hu2024provably} and extend these models to kernelized settings \citep{wu2024uniform}.

\textbf{Conceptors and Gradient Projection Methods.}
Conceptors \citep{jaeger2014controlling,jaeger2017using} provide soft projection matrices for managing recurrent network dynamics, with Boolean operations enabling compositional control. Conceptor-Aided Backpropagation \citep{he2018overcoming} applies this to continual learning by projecting weight updates onto null spaces of previous tasks. Orthogonal gradient projection methods pursue the same goal through different mechanisms: OWM \citep{zeng2019continual} maintains projectors from input representations, OGD \cite{farajtabar2020orthogonal} stores gradient directions directly, and GPM \citep{saha2021gradient} uses SVD on activations to find core gradient subspaces. Extensions include scaled gradient projection \citep{saha2023continual}, class-level gradients \citep{chen2022class}, and adaptive orthogonal projection \citep{guo2022adaptive}. These ideas have been applied to LLM fine-tuning via O-LoRA \citep{wang2023orthogonal} and related methods \citep{tang2026put}.
\vspace{-0.3cm}
\section{Conclusion}
\label{sec:conclusion}
\vspace{-0.1cm}
We introduce Orthogonal Subspace Carving, a tensor memory architecture that projects fillers onto role null spaces before superposition, geometrically suppressing cross-talk between bound structures. \acr decouples context complexity from performance via procedural basis generation and offers significantly reduced memory footprint compared to traditional VSAs and TPRs due to the sub-linear scaling of the filler dimension. Finally, we demonstrate the compatibility of \acr with gradient-based models by validating its performance on XML datasets. See Appendix \ref{app:limitations} for limitations and future work.
\newpage
\section*{Acknowledgments}
We thank the anonymous reviewers for their constructive feedback. Authors were all partly supported by NIH R01AG092220.
\section*{Impact Statement}
This paper presents work whose goal is to advance the field of Machine Learning. There are many potential societal consequences of our work, none which we feel must be specifically highlighted here.

\bibliography{bibliography}

@article{newell1980physical,
  title={Physical symbol systems},
  author={Newell, Allen},
  journal={Cognitive Science},
  volume={4},
  number={2},
  pages={135--183},
  year={1980},
  publisher={Elsevier}
}

@article{newell1982knowledge,
  title={The knowledge level},
  author={Newell, Allen},
  journal={Artificial Intelligence},
  volume={18},
  number={1},
  pages={87--127},
  year={1982},
  publisher={Elsevier}
}

@article{kanerva2009hyperdimensional,
  title={Hyperdimensional computing: An introduction to computing in distributed representation with high-dimensional random vectors},
  author={Kanerva, Pentti},
  journal={Cognitive Computation},
  volume={1},
  number={2},
  pages={139--159},
  year={2009},
  publisher={Springer}
}

@book{marcus2001algebraic,
  title={The algebraic mind: Integrating connectionism and cognitive science},
  author={Marcus, Gary F},
  year={2001},
  publisher={MIT press}
}

@inproceedings{keysers2020measuring,
  title={Measuring compositional generalization: A comprehensive method on realistic data},
  author={Keysers, Daniel and Sch{\"a}rli, Nathanael and Scales, Nathan and Buisman, Hylke and Furrer, Daniel and Kashubin, Sergii and Momchev, Nikola and Sinopalnikov, Danila and Stafiniak, Lukasz and Tihon, Tibor and others},
  booktitle={International Conference on Learning Representations},
  year={2020},
  url={https://openreview.net/forum?id=SygcCnNKwr}
}

@inproceedings{kim2020cogs,
  title={{COGS}: A compositional generalization challenge based on semantic interpretation},
  author={Kim, Najoung and Linzen, Tal},
  booktitle={Proceedings of the 2020 Conference on Empirical Methods in Natural Language Processing (EMNLP)},
  pages={9087--9105},
  year={2020},
  publisher={Association for Computational Linguistics}
}

@inproceedings{li2023slog,
    title = "{SLOG}: A Structural Generalization Benchmark for Semantic Parsing",
    author = "Li, Bingzhi  and
      Donatelli, Lucia  and
      Koller, Alexander  and
      Linzen, Tal  and
      Yao, Yuekun  and
      Kim, Najoung",
    editor = "Bouamor, Houda  and
      Pino, Juan  and
      Bali, Kalika",
    booktitle = "Proceedings of the 2023 Conference on Empirical Methods in Natural Language Processing",
    month = dec,
    year = "2023",
    address = "Singapore",
    publisher = "Association for Computational Linguistics",
    url = "https://aclanthology.org/2023.emnlp-main.194/",
    doi = "10.18653/v1/2023.emnlp-main.194",
    pages = "3213--3232",
}

@inproceedings{dziri2023faith,
  title={Faith and Fate: Limits of Transformers on Compositionality},
  author={Nouha Dziri and Ximing Lu and Melanie Sclar and Xiang Lorraine Li and Liwei Jiang and Bill Yuchen Lin and Peter West and Chandra Bhagavatula and Ronan Le Bras and Jena D. Hwang and Soumya Sanyal and Sean Welleck and Xiang Ren and Allyson Ettinger and Zaid Harchaoui and Yejin Choi},
  booktitle={Advances in Neural Information Processing Systems},
  volume={36},
  year={2023}
}

@article{SMOLENSKY1990159,
title = {Tensor product variable binding and the representation of symbolic structures in connectionist systems},
journal = {Artificial Intelligence},
volume = {46},
number = {1},
pages = {159-216},
year = {1990},
issn = {0004-3702},
doi = {https://doi.org/10.1016/0004-3702(90)90007-M},
url = {https://www.sciencedirect.com/science/article/pii/000437029090007M},
author = {Paul Smolensky}
}

@misc{soulos2023differentiabletreeoperationspromote,
      title={Differentiable Tree Operations Promote Compositional Generalization}, 
      author={Paul Soulos and Edward Hu and Kate McCurdy and Yunmo Chen and Roland Fernandez and Paul Smolensky and Jianfeng Gao},
      year={2023},
      eprint={2306.00751},
      archivePrefix={arXiv},
      primaryClass={cs.CL},
      url={https://arxiv.org/abs/2306.00751}, 
}

@misc{soulos2024compositionalgeneralizationdistributionalshifts,
      title={Compositional Generalization Across Distributional Shifts with Sparse Tree Operations}, 
      author={Paul Soulos and Henry Conklin and Mattia Opper and Paul Smolensky and Jianfeng Gao and Roland Fernandez},
      year={2024},
      eprint={2412.14076},
      archivePrefix={arXiv},
      primaryClass={cs.AI},
      url={https://arxiv.org/abs/2412.14076}, 
}

@article{gosmann2019,
    author = {Gosmann, Jan and Eliasmith, Chris},
    title = {Vector-Derived Transformation Binding: An Improved Binding Operation for Deep Symbol-Like Processing in Neural Networks},
    journal = {Neural Computation},
    volume = {31},
    number = {5},
    pages = {849-869},
    year = {2019},
    month = {05},
    issn = {0899-7667},
    doi = {10.1162/neco_a_01179},
    url = {https://doi.org/10.1162/neco_a_01179},
    eprint = {https://direct.mit.edu/neco/article-pdf/31/5/849/1052783/neco_a_01179.pdf},
}

@ARTICLE{plate1995,
  author={Plate, T. A.},
  journal={IEEE Transactions on Neural Networks}, 
  title={Holographic reduced representations}, 
  year={1995},
  volume={6},
  number={3},
  pages={623-641},
  doi={10.1109/72.377968}}

@article{Schlegel_2021,
   title={A comparison of vector symbolic architectures},
   volume={55},
   ISSN={1573-7462},
   url={http://dx.doi.org/10.1007/s10462-021-10110-3},
   DOI={10.1007/s10462-021-10110-3},
   number={6},
   journal={Artificial Intelligence Review},
   publisher={Springer Science and Business Media LLC},
   author={Schlegel, Kenny and Neubert, Peer and Protzel, Peter},
   year={2021},
   month=dec, pages={4523–4555} }

@article{Kleyko_2022,
   title={A Survey on Hyperdimensional Computing aka Vector Symbolic Architectures, Part I: Models and Data Transformations},
   volume={55},
   ISSN={1557-7341},
   url={http://dx.doi.org/10.1145/3538531},
   DOI={10.1145/3538531},
   number={6},
   journal={ACM Computing Surveys},
   publisher={Association for Computing Machinery (ACM)},
   author={Kleyko, Denis and Rachkovskij, Dmitri A. and Osipov, Evgeny and Rahimi, Abbas},
   year={2022},
   month=dec, pages={1–40} }

@inproceedings{Kanerva1996BinarySO,
  title={Binary Spatter-Coding of Ordered K-Tuples},
  author={Pentti Kanerva},
  booktitle={International Conference on Artificial Neural Networks},
  year={1996},
  url={https://api.semanticscholar.org/CorpusID:12261165}
}

@book{plate2003holographicfourier,
  title={Holographic Reduced Representation: Distributed Representation for Cognitive Structures},
  author={Plate, Tony A},
  year={2003},
  publisher={CSLI Publications}
}

@misc{gallant2015representingobjectsrelationssequences,
      title={Representing Objects, Relations, and Sequences}, 
      author={Stephen I. Gallant and T. Wendy Okaywe},
      year={2015},
      eprint={1501.07627},
      archivePrefix={arXiv},
      primaryClass={cs.LG},
      url={https://arxiv.org/abs/1501.07627}, 
}

@inproceedings{Gayler1998MultiplicativeBR,
  title={Multiplicative Binding, Representation Operators \& Analogy},
  author={Ross W. Gayler},
  year={1998},
  url={https://api.semanticscholar.org/CorpusID:14487970}
}

@inproceedings{alam2024walshhadamardderivedlinear,
 author = {Alam, Mohammad Mahmudul and Oberle, Alexander and Raff, Edward and Biderman, Stella and Oates, Tim and Holt, James},
 booktitle = {Advances in Neural Information Processing Systems},
 doi = {10.52202/079017-0089},
 editor = {A. Globerson and L. Mackey and D. Belgrave and A. Fan and U. Paquet and J. Tomczak and C. Zhang},
 pages = {2711--2733},
 publisher = {Curran Associates, Inc.},
 title = {A Walsh Hadamard Derived Linear Vector Symbolic Architecture},
 url = {https://proceedings.neurips.cc/paper_files/paper/2024/file/0525fa17a8dbea687359116d01732e12-Paper-Conference.pdf},
 volume = {37},
 year = {2024}
}

@article{yu2022understanding,
  title={Understanding hyperdimensional computing for parallel single-pass learning},
  author={Yu, Tao and Zhang, Yichi and Zhang, Zhiru and De Sa, Christopher M},
  journal={Advances in neural information processing systems},
  volume={35},
  pages={1157--1169},
  year={2022}
}

@misc{yeung2024generalizedholographicreducedrepresentations,
      title={Generalized Holographic Reduced Representations}, 
      author={Calvin Yeung and Zhuowen Zou and Mohsen Imani},
      year={2024},
      eprint={2405.09689},
      archivePrefix={arXiv},
      primaryClass={cs.LG},
      url={https://arxiv.org/abs/2405.09689}, 
}

@article{rachkovskij2002representation,
  title={Representation and processing of structures with binary sparse distributed codes},
  author={Rachkovskij, Dmitri A.},
  journal={IEEE transactions on Knowledge and Data Engineering},
  volume={13},
  number={2},
  pages={261--276},
  year={2002},
  publisher={IEEE}
}

@inproceedings{laiho2015high,
  title={High-dimensional computing with sparse vectors},
  author={Laiho, Mika and Poikonen, Jussi H and Kanerva, Pentti and Lehtonen, Eero},
  booktitle={2015 IEEE Biomedical Circuits and Systems Conference (BioCAS)},
  pages={1--4},
  year={2015},
  organization={IEEE}
}

@article{ganesan2021learning,
  title={Learning with holographic reduced representations},
  author={Ganesan, Ashwinkumar and Gao, Hang and Gandhi, Sunil and Raff, Edward and Oates, Tim and Holt, James and McLean, Mark},
  journal={Advances in neural information processing systems},
  volume={34},
  pages={25606--25620},
  year={2021}
}

@inproceedings{jain2016extreme,
  title={Extreme multi-label loss functions for recommendation, tagging, ranking \& other missing label applications},
  author={Jain, Himanshu and Prabhu, Yashoteja and Varma, Manik},
  booktitle={Proceedings of the 22nd ACM SIGKDD international conference on knowledge discovery and data mining},
  pages={935--944},
  year={2016}
}

@article{kohonen2009correlation,
  title={Correlation matrix memories},
  author={Kohonen, Teuvo},
  journal={IEEE transactions on computers},
  volume={100},
  number={4},
  pages={353--359},
  year={2009},
  publisher={IEEE}
}

@article{ANDERSON1972197,
title = {A simple neural network generating an interactive memory},
journal = {Mathematical Biosciences},
volume = {14},
number = {3},
pages = {197-220},
year = {1972},
issn = {0025-5564},
doi = {https://doi.org/10.1016/0025-5564(72)90075-2},
url = {https://www.sciencedirect.com/science/article/pii/0025556472900752},
author = {James A. Anderson},
}

@article{nakano2007associatron,
  title={Associatron-a model of associative memory},
  author={Nakano, Kaoru},
  journal={IEEE Transactions on Systems, Man, and Cybernetics},
  number={3},
  pages={380--388},
  year={2007},
  publisher={IEEE}
}

@article{kohonen1973representation,
  title={Representation of associated data by matrix operators},
  author={Kohonen, Teuvo and Ruohonen, Matti},
  journal={IEEE Transactions on Computers},
  volume={100},
  number={7},
  pages={701--702},
  year={1973},
  publisher={IEEE}
}

@article{hopfield1982neural,
  title={Neural networks and physical systems with emergent collective computational abilities.},
  author={Hopfield, John J},
  journal={Proceedings of the national academy of sciences},
  volume={79},
  number={8},
  pages={2554--2558},
  year={1982}
}

@article{mceliece2003capacity,
  title={The capacity of the Hopfield associative memory},
  author={McEliece, ROBERTJ and Posner, Edwardc and Rodemich, EUGENER and Venkatesh, SANTOSHS},
  journal={IEEE transactions on Information Theory},
  volume={33},
  number={4},
  pages={461--482},
  year={2003},
  publisher={IEEE}
}

@article{baldi1987number,
  title={Number of stable points for spin-glasses and neural networks of higher orders},
  author={Baldi, Pierre and Venkatesh, Santosh S},
  journal={Physical Review Letters},
  volume={58},
  number={9},
  pages={913},
  year={1987},
  publisher={APS}
}

@inproceedings{krotov2016dense,
  title={Dense associative memory for pattern recognition},
  author={Krotov, Dmitry and Hopfield, John J},
  booktitle={Advances in Neural Information Processing Systems},
  volume={29},
  year={2016}
}

@article{demircigil2017model,
  title={On a model of associative memory with huge storage capacity},
  author={Demircigil, Mete and Heusel, Judith and L{\"o}we, Matthias and Upgang, Sven and Vermet, Franck},
  journal={Journal of Statistical Physics},
  volume={168},
  number={2},
  pages={288--299},
  year={2017},
  publisher={Springer}
}

@article{krotov2018dense,
  title={Dense associative memory is robust to adversarial inputs},
  author={Krotov, Dmitry and Hopfield, John},
  journal={Neural computation},
  volume={30},
  number={12},
  pages={3151--3167},
  year={2018},
  publisher={MIT Press One Rogers Street, Cambridge, MA 02142-1209, USA journals-info~…}
}

@article{lucibello2024exponential,
  title={Exponential capacity of dense associative memories},
  author={Lucibello, Carlo and M{\'e}zard, Marc},
  journal={Physical Review Letters},
  volume={132},
  number={7},
  pages={077301},
  year={2024},
  publisher={APS}
}

@article{hu2024provably,
  title={Provably optimal memory capacity for modern hopfield models: Transformer-compatible dense associative memories as spherical codes},
  author={Hu, Jerry Yao-Chieh and Wu, Dennis and Liu, Han},
  journal={Advances in Neural Information Processing Systems},
  volume={37},
  pages={70693--70729},
  year={2024}
}

@article{wu2024uniform,
  title={Uniform memory retrieval with larger capacity for modern hopfield models},
  author={Wu, Dennis and Hu, Jerry Yao-Chieh and Hsiao, Teng-Yun and Liu, Han},
  journal={arXiv preprint arXiv:2404.03827},
  year={2024}
}

@article{jaeger2014controlling,
  title={Controlling recurrent neural networks by conceptors},
  author={Jaeger, Herbert},
  journal={arXiv preprint arXiv:1403.3369},
  year={2014}
}

@article{jaeger2017using,
  title={Using conceptors to manage neural long-term memories for temporal patterns},
  author={Jaeger, Herbert},
  journal={Journal of Machine Learning Research},
  volume={18},
  number={13},
  pages={1--43},
  year={2017}
}

@inproceedings{he2018overcoming,
  title={Overcoming catastrophic interference using conceptor-aided backpropagation},
  author={He, Xu and Jaeger, Herbert},
  booktitle={International Conference on Learning Representations},
  year={2018}
}

@article{zeng2019continual,
  title={Continual learning of context-dependent processing in neural networks},
  author={Zeng, Guanxiong and Chen, Yang and Cui, Bo and Yu, Shan},
  journal={Nature Machine Intelligence},
  volume={1},
  number={8},
  pages={364--372},
  year={2019},
  publisher={Nature Publishing Group UK London}
}

@inproceedings{farajtabar2020orthogonal,
  title={Orthogonal gradient descent for continual learning},
  author={Farajtabar, Mehrdad and Azizan, Navid and Mott, Alex and Li, Ang},
  booktitle={International conference on artificial intelligence and statistics},
  pages={3762--3773},
  year={2020},
  organization={PMLR}
}

@article{saha2021gradient,
  title={Gradient projection memory for continual learning},
  author={Saha, Gobinda and Garg, Isha and Roy, Kaushik},
  journal={arXiv preprint arXiv:2103.09762},
  year={2021}
}

@inproceedings{saha2023continual,
  title={Continual learning with scaled gradient projection},
  author={Saha, Gobinda and Roy, Kaushik},
  booktitle={Proceedings of the AAAI conference on artificial intelligence},
  volume={37},
  number={8},
  pages={9677--9685},
  year={2023}
}

@inproceedings{chen2022class,
  title={Class gradient projection for continual learning},
  author={Chen, Cheng and Zhang, Ji and Song, Jingkuan and Gao, Lianli},
  booktitle={Proceedings of the 30th ACM International Conference on Multimedia},
  pages={5575--5583},
  year={2022}
}

@inproceedings{guo2022adaptive,
  title={Adaptive orthogonal projection for batch and online continual learning},
  author={Guo, Yiduo and Hu, Wenpeng and Zhao, Dongyan and Liu, Bing},
  booktitle={Proceedings of the AAAI Conference on Artificial Intelligence},
  volume={36},
  number={6},
  pages={6783--6791},
  year={2022}
}

@inproceedings{wang2023orthogonal,
  title={Orthogonal subspace learning for language model continual learning},
  author={Wang, Xiao and Chen, Tianze and Ge, Qiming and Xia, Han and Bao, Rong and Zheng, Rui and Zhang, Qi and Gui, Tao and Huang, Xuan-Jing},
  booktitle={Findings of the Association for Computational Linguistics: EMNLP 2023},
  pages={10658--10671},
  year={2023}
}

@inproceedings{tang2026put,
  title={Put the Space of LoRA Initialization to the Extreme to Preserve Pre-trained Knowledge},
  author={Tang, Pengwei and Hu, Xiaolin and Liu, Yong and Ding, Lizhong and Zhang, Dongjie and Wu, Xing and Zhang, Debing},
  booktitle={Proceedings of the AAAI Conference on Artificial Intelligence},
  volume={40},
  number={39},
  pages={33232--33240},
  year={2026}
}

@Misc{Bhatia16,
  author    = {Bhatia, K. and Dahiya, K. and Jain, H. and Kar, P. and Mittal, A. and Prabhu, Y. and Varma, M.},
  title     = {The extreme classification repository: Multi-label datasets and code},
  url       = {http://manikvarma.org/downloads/XC/XMLRepository.html},
  year      = {2016}
}

@book{doran2003geometric,
  title={Geometric algebra for physicists},
  author={Doran, Chris and Lasenby, Anthony},
  year={2003},
  publisher={Cambridge University Press}
}

@misc{chisolm2012geometricalgebra,
      title={Geometric Algebra}, 
      author={Eric Chisolm},
      year={2012},
      eprint={1205.5935},
      archivePrefix={arXiv},
      primaryClass={math-ph},
      url={https://arxiv.org/abs/1205.5935}, 
}

@ARTICLE{welch1974lower,
  author={Welch, L.},
  journal={IEEE Transactions on Information Theory}, 
  title={Lower bounds on the maximum cross correlation of signals (Corresp.)}, 
  year={1974},
  volume={20},
  number={3},
  pages={397-399},
  doi={10.1109/TIT.1974.1055219}}

@inproceedings{pence2025composing,
  title={Composing Linear Layers from Irreducibles},
  author={Pence, Travis and Yamada, Daisuke and Singh, Vikas},
  booktitle={Advances in Neural Information Processing Systems},
  year={2025}
}

@article{pilanci2024complexity,
  title={From complexity to clarity: Analytical expressions of deep neural network weights via clifford algebra and convexity},
  author={Pilanci, Mert},
  journal={Transactions on machine learning research},
  year={2024}
}

@article{Ruheetal2023b,
  author  = {Ruhe, David and Gupta, Jayesh K. and de Keninck, Steven and Welling, Max and Brandstetter, Johannes},
  title   = {Geometric Clifford Algebra Networks},
  journal = {International Conference on Machine Learning},
  year    = {2023},
  doi     = {10.48550/arxiv.2302.06594}
}
\bibliographystyle{icml2026}

\newpage
\appendix
\onecolumn

\appendix
\section{Limitations and Future Work}
\label{app:limitations}
As shown in Figure \ref{fig:memorycomp2} and \ref{fig:memorycomp3}, the representation density of \acr is slightly lower than that of traditional VSAs. While \acr significantly outperforms VSAs in terms of total memory, VSAs are slightly better when superposition memory, not total storage, is the constraint.

As illustrated in Figure \ref{fig:VSA_pic_combined}, in VSAs the bind operation produces a vector dissimilar to its constituents, while the bundle operation preserves similarity. \acr similarly preserves similarity during bundling, however unlike VSAs, it allows the binding operation to do so as well. In the extreme case, if a filler vector lies entirely within the orthogonal complement of the context, the rejection operation leaves the filler unchanged. Given that VSAs have established applications in information hiding \citep{alam2024walshhadamardderivedlinear}, exploring the implications of this preserved similarity and mechanisms for encouraging dissimilarity in \acr remains a promising direction for future work.
\section{Retrieval Statistics}
\label{app:retrieval_stats}
In this appendix, we derive the mean and variance of the retrieval score $S$ from Theorem~\ref{thm:interference}. We assume that the component vectors $v \in \mathbb{R}^d$ are independent random vectors drawn from an isotropic Gaussian distribution, and that each binding uses an independently generated context.

\subsection{Score Definition}

The retrieval score $S$ is defined as the Frobenius inner product between the memory tensor $M = \sum_{j=1}^N T^{(j)}$ and a candidate query tensor $T^{(q)}$.

Recall that each bound tensor $T$ is the order-$p$ tensor product of the projected, normalized filler components. Let $\tilde{v}^{(j)}_i$ denote the $i$-th component of the $j$-th filler after projection and normalization:
\begin{equation}
\tilde{v}^{(j)}_i = \frac{P^\perp_{C_j} v^{(j)}_i}{\|P^\perp_{C_j} v^{(j)}_i\|}
\end{equation}

The pairwise inner product between two tensor terms $Z_{jq} = \langle T^{(j)}, T^{(q)} \rangle_F$ factors into the product of component inner products:
\begin{equation}
Z_{jq} = \prod_{i=1}^{p} \langle \tilde{v}^{(j)}_i, \tilde{v}^{(q)}_i \rangle
\end{equation}

\subsection{Inner Product Statistics for Independent Contexts}

Before analyzing the moments of $Z_{jq}$, we establish the key statistical property of inner products between vectors projected onto different random subspaces.

Let $P_1^\perp, P_2^\perp$ be projections onto independently generated random $(d-k)$-dimensional subspaces. Let $u, v \sim \mathcal{N}(0, I_d)$ be independent, and define:
\begin{equation}
\tilde{u} = \frac{P_1^\perp u}{\|P_1^\perp u\|}, \quad \tilde{v} = \frac{P_2^\perp v}{\|P_2^\perp v\|}
\end{equation}

\textbf{Claim:} $\mathbb{E}[\langle \tilde{u}, \tilde{v} \rangle^2] = \frac{1}{d}$

\begin{proof}
Conditioned on the subspaces, $\tilde{u}$ and $\tilde{v}$ are independent uniform unit vectors in their respective $(d-k)$-dimensional subspaces. Let $E, F \in \mathbb{R}^{d \times (d-k)}$ be orthonormal bases for these subspaces. Then $\tilde{u} = E\alpha$ and $\tilde{v} = F\beta$ where $\alpha, \beta$ are uniform on $S^{d-k-1}$.

The inner product is:
\begin{equation}
\langle \tilde{u}, \tilde{v} \rangle = \alpha^T E^T F \beta = \alpha^T G \beta
\end{equation}
where $G = E^T F$. For uniform vectors on $S^{d-k-1}$:
\begin{equation}
\mathbb{E}[\langle \tilde{u}, \tilde{v} \rangle^2 \mid G] = \frac{\|G\|_F^2}{(d-k)^2}
\end{equation}

Since $P_2^\perp = FF^T$:
\begin{equation}
\|G\|_F^2 = \|E^T F\|_F^2 = \mathrm{tr}(E^T F F^T E) = \mathrm{tr}(E^T P_2^\perp E)
\end{equation}

Taking expectations over independent random subspaces, $\mathbb{E}[P_2^\perp] = \frac{d-k}{d}I_d$, so:
\begin{equation}
\mathbb{E}[\|G\|_F^2] = \frac{d-k}{d} \mathrm{tr}(E^T E) = \frac{(d-k)^2}{d}
\end{equation}

Therefore:
\begin{equation}
\mathbb{E}[\langle \tilde{u}, \tilde{v} \rangle^2] = \frac{(d-k)^2/d}{(d-k)^2} = \frac{1}{d}
\end{equation}
\end{proof}

\subsection{Moment Analysis}

We now analyze the moments of $Z_{jq}$ using the result above.

\textbf{1. Signal Term ($j = q$).} In the case of a match, we are taking the inner product of identical unit vectors.
\begin{equation}
\langle \tilde{v}^{(q)}_i, \tilde{v}^{(q)}_i \rangle = 1 \implies Z_{qq} = \prod_{i=1}^{p} (1) = 1
\end{equation}

Thus, the signal is deterministic and so
\begin{equation}
\mathbb{E}[Z_{qq}] = 1, \quad \mathrm{Var}(Z_{qq}) = 0
\end{equation}

\textbf{2. Interference Term ($j \neq q$).} In the case of a mismatch, $\tilde{v}^{(j)}_i$ and $\tilde{v}^{(q)}_i$ are derived from independent Gaussian vectors projected onto independently generated $(d-k)$-dimensional subspaces. By the claim above:
\begin{equation}
\mathbb{E}[\langle \tilde{v}^{(j)}_i, \tilde{v}^{(q)}_i \rangle] = 0, \quad \mathbb{E}[\langle \tilde{v}^{(j)}_i, \tilde{v}^{(q)}_i \rangle^2] = \frac{1}{d}
\end{equation}

Since the components $i = 1 \ldots p$ are independent, the expectation of the product is the product of the expectations:
\begin{equation}
\mathbb{E}[Z_{jq}] = \prod_{i=1}^{p} \mathbb{E}[\langle \tilde{v}^{(j)}_i, \tilde{v}^{(q)}_i \rangle] = 0
\end{equation}

The variance calculation follows:
\begin{equation}
\mathrm{Var}(Z_{jq}) = \mathbb{E}[Z_{jq}^2] = \prod_{i=1}^{p} \mathbb{E}[\langle \tilde{v}^{(j)}_i, \tilde{v}^{(q)}_i \rangle^2] = \left( \frac{1}{d} \right)^p = \frac{1}{d^p}
\end{equation}

\subsection{Total Retrieval Variance}

The retrieval score is $S = \sum_{j=1}^N Z_{jq} = Z_{qq} + \sum_{j \neq q} Z_{jq} = 1 + I$, where $I = \sum_{j \neq q} Z_{jq}$ is the interference. Since the interference terms are independent with zero mean and variance $\frac{1}{d^p}$:
\begin{equation}
\mathbb{E}[S] = 1, \quad \mathrm{Var}(S) = \mathrm{Var}(I) = \frac{N-1}{d^p}
\end{equation}
\newpage
\section{Clifford Generalization of TPR}
\label{app:TPR_Cliffgen}
 \citet{chisolm2012geometricalgebra} provides the needed background on Clifford algebra. We conclude this section by providing motivation for the rejection operator.
\subsection{Motivation}
The carving operator $P_\mathcal{C}^\perp$ from Definition 4.3 can be expressed in Clifford algebra without matrices. The idea is simple: instead of representing the context subspace with a basis matrix, we encode it as a single object called a blade.

Let $\mathcal{C}$ be a context with orthogonal vectors $r_1, \ldots, r_k$. Their wedge product $A_r = r_1 \wedge \cdots \wedge r_k$ is a $k$-blade representing the subspace they span. This blade \emph{is} the context, not a matrix describing it, but the subspace itself as an algebraic object.

Clifford algebra provides a natural decomposition of any vector $a$ into components parallel and perpendicular to this subspace:
\[
P_{A_r}(a) + R_{A_r}(a) = a, \quad \text{where} \quad P_{A_r}(a) = (a \mathbin{\lrcorner} A_r) A_r^{-1}.
\]
The projection $P_{A_r}(a)$ lies inside the context; the rejection $R_{A_r}(a)$ is what remains after removing that component. This rejection is exactly the carving operation - it strips away whatever part of $a$ overlaps with the forbidden subspace.

In this formulation, fillers also become subspaces. A filler is now a $p$-blade $F = v_1 \wedge \cdots \wedge v_p$ rather than a tuple of vectors. To bind these two subspaces, we reject each component individually and wedge the results:
\[
R_{A_r}(v_1 \wedge \cdots \wedge v_p) := R_{A_r}(v_1) \wedge \cdots \wedge R_{A_r}(v_p).
\]
The output is a blade lying entirely in the orthogonal complement of the context, which is precisely what \acr binding requires. This perspective offers geometric clarity, but proving properties in this setting is more involved, so we use the GPU-friendly linear algebra formulation throughout the paper.

\acr is not alone in admitting a Clifford-algebraic formulation. We generalize TPR using the framework of Clifford algebras. Let $\mathrm{Cl}(n)$ denote the Clifford algebra generated by $n$ basis vectors squaring to $1$. We show that representing roles and fillers as vectors in this algebra yields a strict generalization of classical TPR.

\subsection{Recursive Unbinding Theorem}
Most proofs are Lean-verified.
\begin{definition}[Vector Space Assumptions]
Let $V$ be a real inner product space with a symmetric, bilinear inner product denoted by $(\cdot)$. We assume the standard left contraction (interior product) on the exterior algebra $\bigwedge V$.
\end{definition}

\begin{definition}[Contraction Convention]
We adopt the standard convention that left contraction is left-associative and evaluated sequentially from the inside out:
\[
a_r \lrcorner \dots \lrcorner a_1 \lrcorner B \equiv a_r \lrcorner ( \dots ( a_1 \lrcorner B ) \dots )
\]
\end{definition}

\begin{lemma}[Iterated Contraction Identity]
\label{lem:contraction}
Let $U = (u_1, \dots, u_r)$ be a sequence of $r$ query vectors and $V = (v_1, \dots, v_{r+1})$ be a sequence of $r+1$ target vectors forming a blade $B = v_1 \wedge \dots \wedge v_{r+1}$.
The iterated contraction of $U$ onto $B$ is given by:
\[
u_r \lrcorner ( \dots ( u_1 \lrcorner B ) \dots ) = \sum_{\sigma \in S_{r+1}} \mathrm{sgn}(\sigma) \left( \prod_{\ell=1}^{r} u_\ell \cdot v_{\sigma(\ell)} \right) v_{\sigma(r+1)}
\]
\begin{proof}
This is a standard result in exterior algebra relating iterated contractions to the generalized Laplace expansion of a determinant \citep{doran2003geometric}.

While the contraction of a single vector introduces a position-dependent sign $(-1)^{m-1}$, the cumulative sign for a sequence of $r$ contractions corresponds exactly to the signature of the permutation $\sigma$. Specifically, there is a natural bijection between valid contraction paths (sequences of distinct indices removed from $B$) and the symmetric group $S_{r+1}$. The permutation $\sigma \in S_{r+1}$ is defined by mapping the contraction step $\ell$ to the target index $\sigma(\ell)$, and mapping $r+1$ to the survivor index $\sigma(r+1)$. The sign $(-1)^{m-1}$ for removing the $m$-th vector accumulates multiplicatively over $r$ contractions, resulting in $\mathrm{sgn}(\sigma)$ due to the parity of the permutation's inversion count.
\end{proof}
\end{lemma}

\subsubsection{Theorem Statement}

\begin{theorem}[Recursive Unbinding]
Let $r \geq 1$. Consider a bundled object $A$ consisting of $p$ terms:
\[
A = \sum_{j=1}^{p} a_{j_1} \wedge a_{j_2} \wedge \cdots \wedge a_{j_r} \wedge f_j
\]
\textbf{Notation:}
Let $u = (u_1, \dots, u_r) = (a_{i_1}, \dots, a_{i_r})$ be the sequence of query vectors.
For each term $j$, let $v^{(j)} = (a_{j_1}, \ldots, a_{j_r}, f_j)$ be the sequence of target vectors.

The result of recursively unbinding $A$ using the query $u$ is:
\[
a_{i_r} \lrcorner \left( \cdots \left( a_{i_1} \lrcorner A \right) \cdots \right) = \sum_{j=1}^{p} \left( d_j f_j + \sum_{k=1}^{r} c_{j_k} a_{j_k} \right)
\]
where the coefficients are defined by sums over the symmetric group $S_{r+1}$:
\begin{align*}
    d_j &= \sum_{\sigma \in G_{r+1}} (\mathrm{sgn}\, \sigma) \prod_{\ell=1}^{r} (u_\ell \cdot v^{(j)}_{\sigma(\ell)}) \\
    c_{j_k} &= \sum_{\sigma \in (k, r+1)G_{r+1}} (\mathrm{sgn}\, \sigma) \prod_{\ell=1}^{r} (u_\ell \cdot v^{(j)}_{\sigma(\ell)})
\end{align*}
Here:
\begin{itemize}
    \item $G_{r+1} = \mathrm{Stab}(r+1) \cong S_r$ is the subgroup of permutations fixing index $r+1$.
    \item $(k, r+1)$ denotes the transposition swapping indices $k$ and $r+1$.
    \item $(k, r+1)G_{r+1}$ is the left coset consisting of all $\sigma$ such that $\sigma(r+1) = k$.
\end{itemize}
\end{theorem}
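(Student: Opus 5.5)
The plan is to reduce everything to the Iterated Contraction Identity (Lemma~\ref{lem:contraction}) applied term by term, then regroup the resulting sum according to which target vector survives.

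\textbf{Step 1: linearity.} Left contraction by a fixed vector is a linear map on $\bigwedge V$, and a composition of linear maps is linear. Hence
\[
a_{i_r} \lrcorner ( \cdots ( a_{i_1} \lrcorner A ) \cdots ) = \sum_{j=1}^{p} a_{i_r} \lrcorner ( \cdots ( a_{i_1} \lrcorner B_j ) \cdots ),
\]
where $B_j = a_{j_1}\wedge\cdots\wedge a_{j_r}\wedge f_j$. Each $B_j$ is exactly a blade of the form in Lemma~\ref{lem:contraction}, with $r+1$ target vectors $v^{(j)} = (a_{j_1},\dots,a_{j_r},f_j)$ and query sequence $u=(a_{i_1},\dots,a_{i_r})$.

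\textbf{Step 2: apply the lemma.} For each $j$, the lemma gives
\[
\sum_{\sigma\in S_{r+1}} \mathrm{sgn}(\sigma)\Big(\prod_{\ell=1}^r u_\ell\cdot v^{(j)}_{\sigma(\ell)}\Big) v^{(j)}_{\sigma(r+1)}.
\]

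\textbf{Step 3: partition by survivor.} Split $S_{r+1}$ into the disjoint sets $\{\sigma : \sigma(r+1)=m\}$ for $m=1,\dots,r+1$.
\begin{itemize}
\item For $m=r+1$ this set is the stabilizer $G_{r+1}$, and the surviving vector is $v^{(j)}_{r+1}=f_j$. Collecting these terms gives the coefficient $d_j f_j$.
\item For $m=k\le r$, the set is the left coset $(k,r+1)G_{r+1}$. To check this, note $\tau G_{r+1}$ with $\tau=(k,r+1)$ consists of the $\tau\sigma'$ with $\sigma'(r+1)=r+1$, so $\tau\sigma'(r+1)=k$. Conversely, any $\sigma$ with $\sigma(r+1)=k$ satisfies $\tau\sigma\in G_{r+1}$. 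The surviving vector is $v^{(j)}_k=a_{j_k}$, and collecting these terms gives $c_{j_k}a_{j_k}$.
\end{itemize}
Since these cosets partition $S_{r+1}$, summing over $k$ and over $j$ yields the stated formula.

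\textbf{Main obstacle.} The theorem itself is bookkeeping; the real content is Lemma~\ref{lem:contraction}, which I may assume. The point needing care is that the lemma's sign convention matches the contraction ordering fixed in the Contraction Convention, namely that $u_1$ is applied first. If the orders were reversed, the product indices would pair $u_\ell$ with the wrong contraction step, which could change signs.

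I would check this on $r=1$. Here $a\lrcorner(v_1\wedge v_2) = (a\cdot v_1)v_2 - (a\cdot v_2)v_1$. This matches the lemma: the identity permutation contributes $+(u_1\cdot v_1)v_2$, and the transposition contributes $-(u_1\cdot v_2)v_1$. I would then rely on the lemma's inductive statement for general $r$.
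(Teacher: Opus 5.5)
Your proposal is correct and follows the paper's proof essentially step for step. You reduce to a single blade by linearity, apply the Iterated Contraction Identity, and partition $S_{r+1}$ by the survivor index $\sigma(r+1)$, identifying the stabilizer $G_{r+1}$ (the filler survives) and the left cosets $(k,r+1)G_{r+1}$ (the role $a_{j_k}$ survives); your two-sided check of the coset description and your $r=1$ sign check are sound extra details that the paper treats more briefly.
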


\begin{remark}
If $r=0$, the contraction trivially returns $A$, consistent with the empty product in the coefficient definitions.
\end{remark}

\subsubsection{Proof}

\begin{proof}
By linearity of the contraction and the dot product, it suffices to prove the result for a single term $B = v_1 \wedge \cdots \wedge v_{r+1}$. For clarity, in the proof we drop the superscript $(j)$ when analyzing a single term, letting $v = (a_{j_1}, \ldots, a_{j_r}, f_j)$.

\paragraph{Step 1: Application of the Contraction Lemma.}
We apply Lemma \ref{lem:contraction} directly to the term $B$. The iterated contraction yields a summation over the symmetric group $S_{r+1}$:
\[
\text{Result} = \sum_{\sigma \in S_{r+1}} (\mathrm{sgn}\, \sigma) \left( \prod_{\ell=1}^{r} u_\ell \cdot v_{\sigma(\ell)} \right) v_{\sigma(r+1)}
\]
This step effectively expands the determinants of the interaction matrices, grouping terms by which vector $v_{\sigma(r+1)}$ survives the contraction process.

\paragraph{Step 2: Partitioning the Symmetric Group.}
We partition the sum based on the index of the surviving vector, determined by $\sigma(r+1)$. The set of indices is $\{1, \dots, r+1\}$.

\textbf{Case A: The filler survives ($\sigma(r+1) = r+1$).} \\
The subset of permutations satisfying this condition is exactly the stabilizer subgroup $G_{r+1}$. For these terms, the surviving vector is $v_{r+1} = f_j$.
The contribution to the sum is:
\[
\left( \sum_{\sigma \in G_{r+1}} (\mathrm{sgn}\, \sigma) \prod_{\ell=1}^{r} (u_\ell \cdot v_{\sigma(\ell)}) \right) f_j
\]
This matches the definition of $d_j f_j$.

\textbf{Case B: A role survives ($\sigma(r+1) = k$ for some $k \in \{1, \dots, r\}$).} \\
For a fixed $k$, the set of permutations mapping $r+1$ to $k$ corresponds to the left coset formed by composing the stabilizer with the transposition $\tau = (k, r+1)$:
\[
\{ \sigma \in S_{r+1} \mid \sigma(r+1) = k \} = (k, r+1)G_{r+1}
\]
This bijection ensures the coset $(k, r+1)G_{r+1}$ exhaustively enumerates all permutations with $\sigma(r+1)=k$, as left multiplication by $\tau$ is invertible.

For these terms, the surviving vector is $v_k = a_{j_k}$. The contribution is:
\[
\sum_{k=1}^{r} \left( \sum_{\sigma \in (k, r+1)G_{r+1}} (\mathrm{sgn}\, \sigma) \prod_{\ell=1}^{r} (u_\ell \cdot v_{\sigma(\ell)}) \right) a_{j_k}
\]
This matches the definition of $\sum c_{j_k} a_{j_k}$.

\paragraph{Conclusion.}
Summing the results from Case A and Case B yields the expression in the theorem statement.
\end{proof}

\subsection{Exact Recovery}
\begin{theorem}[Recursive Clifford Extension of TPR]
Let $a_1, \ldots, a_r$ be vectors in $\mathrm{Cl}(n+m)$ with support only on the first $n$ basis vectors, and let $f_1, \ldots, f_p$ be vectors with support only on the last $m$ basis vectors. Assume the roles $a_i$ are orthonormal. For a bundled object $A = \sum_{j=1}^{p} a_{j_1} \wedge a_{j_2} \wedge \cdots \wedge a_{j_r} \wedge f_j$, assume that the roles in each term are written in a canonical order (e.g., $j_1 < j_2 < \cdots < j_r$). Then unbinding with the matching sequence of roles recovers the filler exactly:
\[
a_{i_r} \lrcorner \left( a_{i_{r-1}} \lrcorner \left( \cdots \left( a_{i_1} \lrcorner A \right) \cdots \right) \right) = f_i
\]
where $i$ is the unique index such that $(a_{i_1}, \ldots, a_{i_r}) = (a_{j_1}, \ldots, a_{j_r})$, and the result is $0$ if no such $j$ exists.
\end{theorem}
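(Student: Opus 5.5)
Apply the Recursive Unbinding theorem directly, then show that under the support and orthonormality hypotheses almost every coefficient vanishes. For each term $j$, set $v^{(j)} = (a_{j_1},\dots,a_{j_r},f_j)$ and $u = (a_{i_1},\dots,a_{i_r})$. The recursive unbinding then equals $\sum_j (d_j f_j + \sum_k c_{j_k} a_{j_k})$, where $d_j$ and $c_{j_k}$ are signed sums over $\sigma\in S_{r+1}$ of $\prod_{\ell=1}^r u_\ell\cdot v^{(j)}_{\sigma(\ell)}$. Everything reduces to evaluating these products using two facts.
\begin{itemize}
\item \emph{Disjoint support.} Each $u_\ell$ is a role supported on the first $n$ basis vectors and $f_j$ on the last $m$, so $u_\ell\cdot f_j = 0$.
\item \emph{Orthonormality.} $u_\ell\cdot a_{j_k} = \delta$ of the two roles.
\end{itemize}

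\textbf{Role coefficients vanish.} Consider $c_{j_k}$, where $\sigma(r+1)=k$. Since $\sigma$ is a bijection, some $\ell\le r$ has $\sigma(\ell)=r+1$. That factor is $u_\ell\cdot f_j=0$, so every product in $c_{j_k}$ is zero. Hence all role coefficients vanish and only the $d_j f_j$ terms survive.

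\textbf{Filler coefficients.} For $d_j$, $\sigma$ fixes $r+1$ and restricts to a permutation $\pi\in S_r$. Thus $d_j=\det\big[a_{i_\ell}\cdot a_{j_{k}}\big]_{\ell,k}$, the Gram determinant of the query roles against the stored roles. By orthonormality this matrix is a $0/1$ matrix with at most one $1$ per row and column.
\begin{itemize}
\item If the multiset $\{i_\ell\}$ differs from $\{j_k\}$, some row is zero, so $d_j=0$.
\item If the sets agree, the matrix is a permutation matrix and $d_j=\pm1$.
\end{itemize}
The canonical ordering resolves the sign. The query is the same ordered sequence as the stored term, with $i_\ell=j_\ell$ as in the statement, so the matrix is the identity and $d_j=1$. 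Any other term whose role set equals the query's would have to be written in the same canonical order, so it is the same role tuple. If the roles in a term are not distinct, the wedge vanishes and the term contributes nothing.

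\textbf{Main obstacle: the uniqueness of $i$.} If the bundle contains two terms with the same role tuple, the result is their filler sum. I would read the statement as presupposing that role tuples are distinct across terms, and note this as the intended hypothesis. Under that reading the sum collapses to $f_i$, or to $0$ if no term matches. I would also make explicit that queries are presented in canonical order, since a nontrivially permuted query gives $\operatorname{sgn}(\pi) f_i$ rather than $f_i$.
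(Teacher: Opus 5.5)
Your proof is correct and matches the paper's: apply the Recursive Unbinding theorem, kill every $c_{j_k}$ through the factor $a_{i_\ell}\cdot f_j=0$, and use orthonormality plus canonical ordering to force $d_j=1$ on the matching term and $0$ on all others. Writing $d_j$ as a Gram determinant only rephrases the paper's ``only the identity permutation survives'' step, and your caveats (role tuples distinct across terms, query given in canonical order) make explicit assumptions the paper's proof uses silently.
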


\begin{proof}
By the Recursive Unbinding theorem, with $(v_1, \ldots, v_{r+1}) = (a_{j_1}, \ldots, a_{j_r}, f_j)$, we have
\[
a_{i_r} \lrcorner \left( \cdots \left( a_{i_1} \lrcorner A \right) \cdots \right) = \sum_{j=1}^{p} \left( d_j f_j + \sum_{k=1}^{r} c_{j_k} a_{j_k} \right)
\]
where
\begin{align*}
d_j &= \sum_{\sigma \in G_{r+1}} (\mathrm{sgn}\, \sigma) \prod_{\ell=1}^{r} (a_{i_\ell} \cdot v_{\sigma(\ell)}) \\
c_{j_k} &= \sum_{\sigma \in (k, r+1)G_{r+1}} (\mathrm{sgn}\, \sigma) \prod_{\ell=1}^{r} (a_{i_\ell} \cdot v_{\sigma(\ell)})
\end{align*}

For $d_j$: since $\sigma \in G_{r+1}$ fixes $r+1$, we have $\sigma(\ell) \in \{1, \ldots, r\}$ for all $\ell \leq r$, so each $v_{\sigma(\ell)} = a_{j_{\sigma(\ell)}}$ is a role. Thus
\[
d_j = \sum_{\sigma \in G_{r+1}} (\mathrm{sgn}\, \sigma) \prod_{\ell=1}^{r} (a_{i_\ell} \cdot a_{j_{\sigma(\ell)}})
\]
By orthonormality, $a_{i_\ell} \cdot a_{j_{\sigma(\ell)}} = \delta_{i_\ell, j_{\sigma(\ell)}}$. The product is nonzero only if $i_\ell = j_{\sigma(\ell)}$ for all $\ell$. Since both the query sequence $(i_1, \ldots, i_r)$ and the stored sequence $(j_1, \ldots, j_r)$ are in canonical order, the only permutation that can satisfy this is the identity. Thus $d_j = 1$ if $(i_1, \ldots, i_r) = (j_1, \ldots, j_r)$, and $d_j = 0$ otherwise.

For $c_{j_k}$: since $\sigma \in (k, r+1)G_{r+1}$ sends $r+1$ to $k$, there exists some $\ell$ with $\sigma(\ell) = r+1$, meaning $v_{\sigma(\ell)} = f_j$. Since roles and fillers have disjoint support, $a_{i_\ell} \cdot f_j = 0$, so the entire product vanishes. Thus $c_{j_k} = 0$ for all $j, k$.

Therefore
\[
a_{i_r} \lrcorner \left( \cdots \left( a_{i_1} \lrcorner A \right) \cdots \right) = \sum_{j=1}^{p} d_j f_j = f_i
\]
where $i$ is the index with $(a_{i_1}, \ldots, a_{i_r}) = (a_{j_1}, \ldots, a_{j_r})$, and the result is $0$ if no such $j$ exists.
\end{proof}

\subsection{Compressing TPR via Orthogonal Complements}

The proofs above only use the disjoint support of roles and fillers to establish orthogonality. This suggests a relaxation: if we instead require that fillers live in the orthogonal complement of the roles, exact recovery still holds, even when roles and fillers share the same basis vectors.

This observation has implications for parameter efficiency. Letting roles and fillers live on completely disjoint supports requires $n^{|\mathcal{C}| + 1}$ parameters (as it is TPR). Letting them share all basis vectors while enforcing orthogonality reduces this to $\binom{n}{|\mathcal{C}| + 1}$ parameters. In other words, we can compress the TPR representation while keeping it lossless by constraining fillers to the orthogonal complement of the role subspace---precisely the geometric relationship that \acr enforces through subspace carving.

\newpage
\section{Experiments}
\label{app:experiments}

\subsection{Extended Experimental Results}
Prior to presenting more experimental results, we outline key implementation details. Context generation requires QR decomposition of a $d \times k$ matrix costing $\mathcal{O}(dk^2)$ operations, independent of context size. Simple testing shows that $k$ close to $d-\sqrt{d}$ is optimal. Therefore, instead of computing the carving operator through the projection matrix, we compute the operator directly, generating only $k_1 = d-k$ orthogonal vectors. This is negligible for $k_1\ll d$ and extremely parallelizable across contexts (simply outer products and additions). Frequently accessed contexts can be cached.

\paragraph{Other baseline VSA experiments.} We give more experiments. Figure \ref{fig:allVSAs_8100} shows \acr's performance against VSAs of dimension $d=8100$. To showcase the fact that \acr is invariant to context size, we also perform the same experiment for role depth of $48$ and compare against VSAs with dimension $d=8100$ in Figure \ref{fig:allVSAs_8100_role48}. This dimension was chosen as it is a perfect square, which required by VTB, that is close to $8192$. Figure $\ref{fig:memorycomp2}$ shows \acr's performance with $p=2$ against VSA when superposition memory is held constant. As with Figure \ref{fig:memorycomp3_ret}, \acr beats the VSA with the highest dimension with fewer params that the VSA with the lowest dimension. We presented retrieval accuracy for \acr with $p=3$ in Figure \ref{fig:memorycomp3_ret} in \S\ref{sec:experiments}; we present both the retrieval and recognition accuracy in Figure \ref{fig:memorycomp3}.

\begin{figure*}[htbp]
    \centering
    \begin{subfigure}[b]{0.47\textwidth}
        \centering
        \includegraphics[width=\textwidth, trim=0.40in 5.88in 2.15in 0.45in, clip]{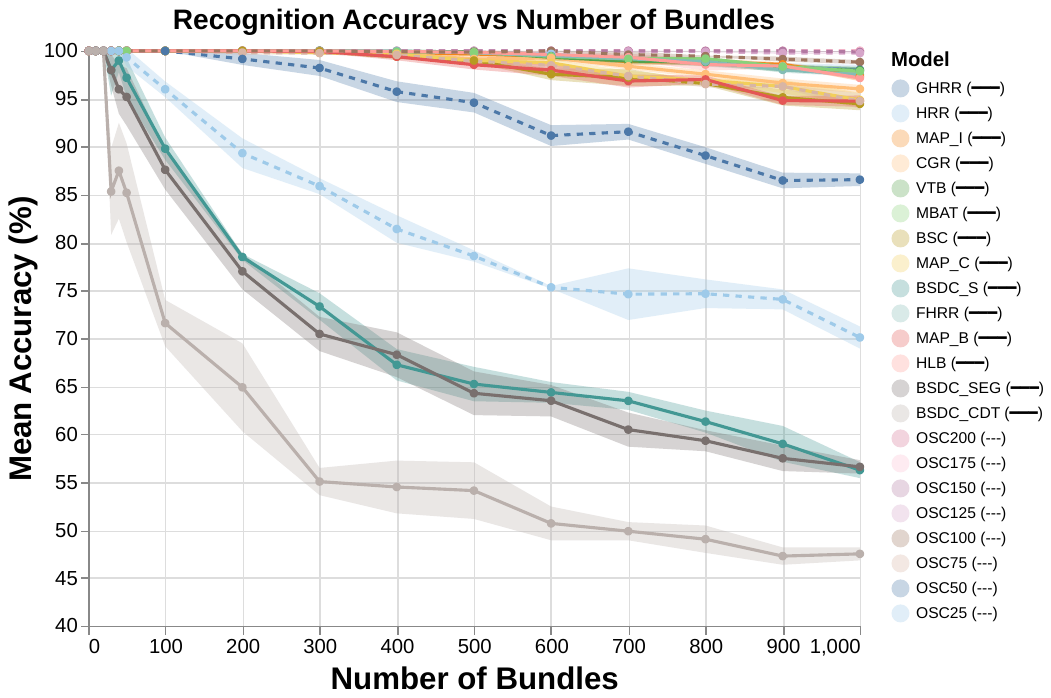}
        \label{fig:retrieval_acc_8100}
    \end{subfigure}
    \hfill 
    \begin{subfigure}[b]{0.525\textwidth}
        \centering
        \includegraphics[width=\textwidth, trim=0.70in 5.88in 1.14in 0.45in, clip]{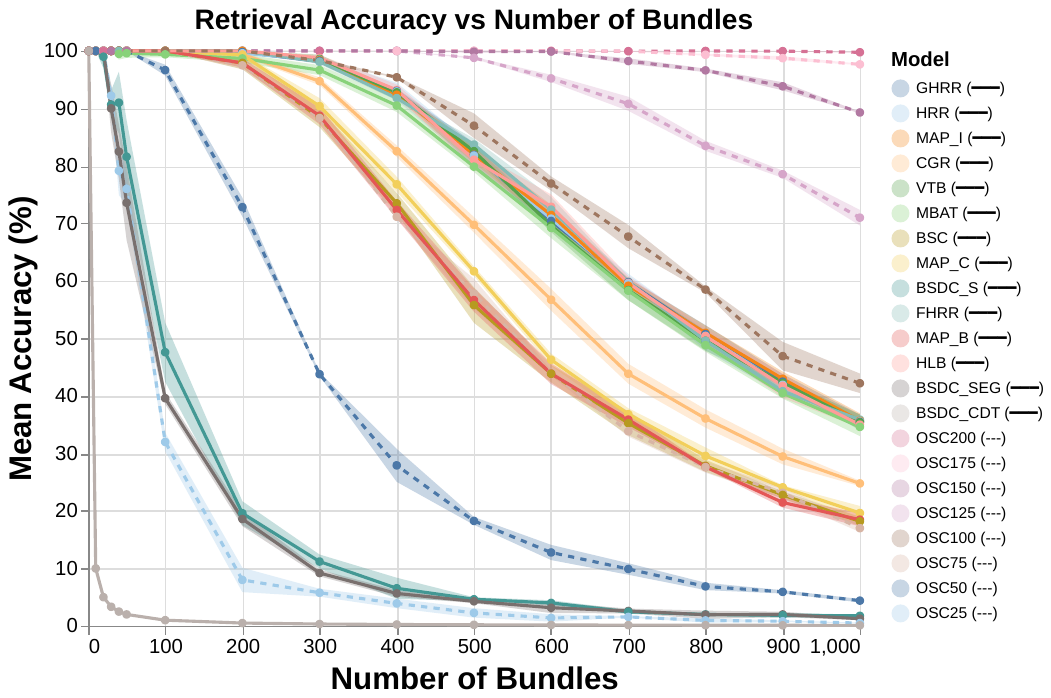}
        \label{fig:recog_acc_8100}
    \end{subfigure}
    
    \caption{Performance comparison of $14$ VSAs at $d=8100$ and OSC across $d=25,50,75,100,125,150,175,$ and $200$ for $p=2$ at increasing bundle counts with unique fillers and role depth $1$. \acr outperforms all VSAs on both retrieval and recognition tasks. Note the difference in $Y$ axes between graphs. Standard deviation is shown as the shaded regions.}
    \label{fig:allVSAs_8100}
\end{figure*}
\begin{figure*}[htbp]
    \centering
    \begin{subfigure}[b]{0.47\textwidth}
        \centering
        \includegraphics[width=\textwidth, trim=0.40in 5.88in 2.15in 0.45in, clip]{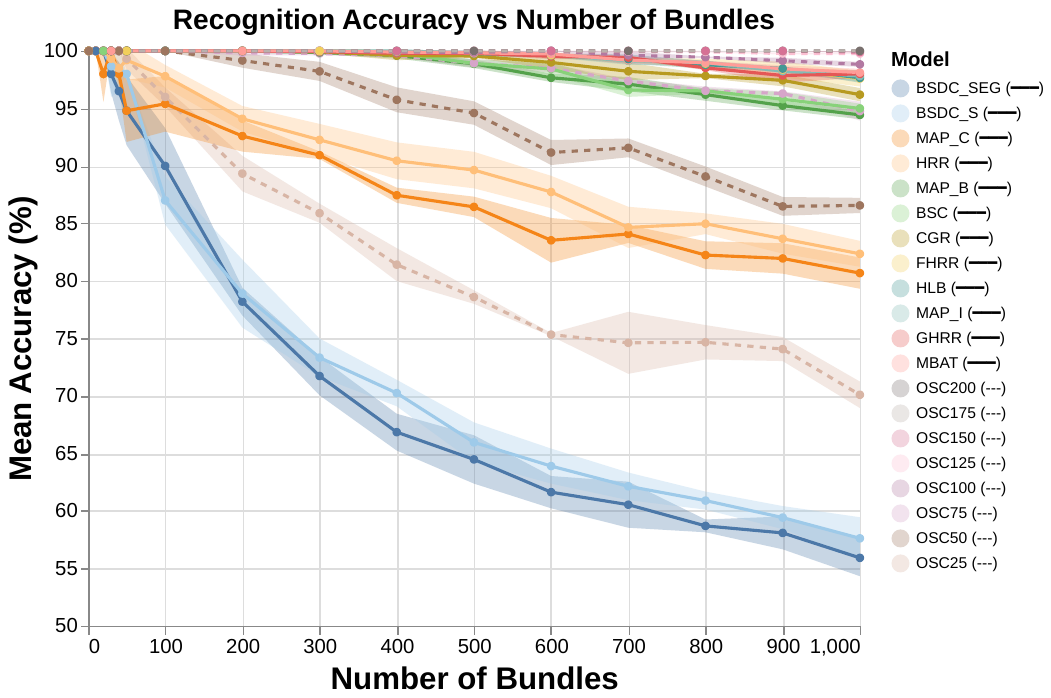}
    \end{subfigure}
    \hfill 
    \begin{subfigure}[b]{0.525\textwidth}
        \centering
        \includegraphics[width=\textwidth, trim=0.70in 5.88in 1.14in 0.45in, clip]{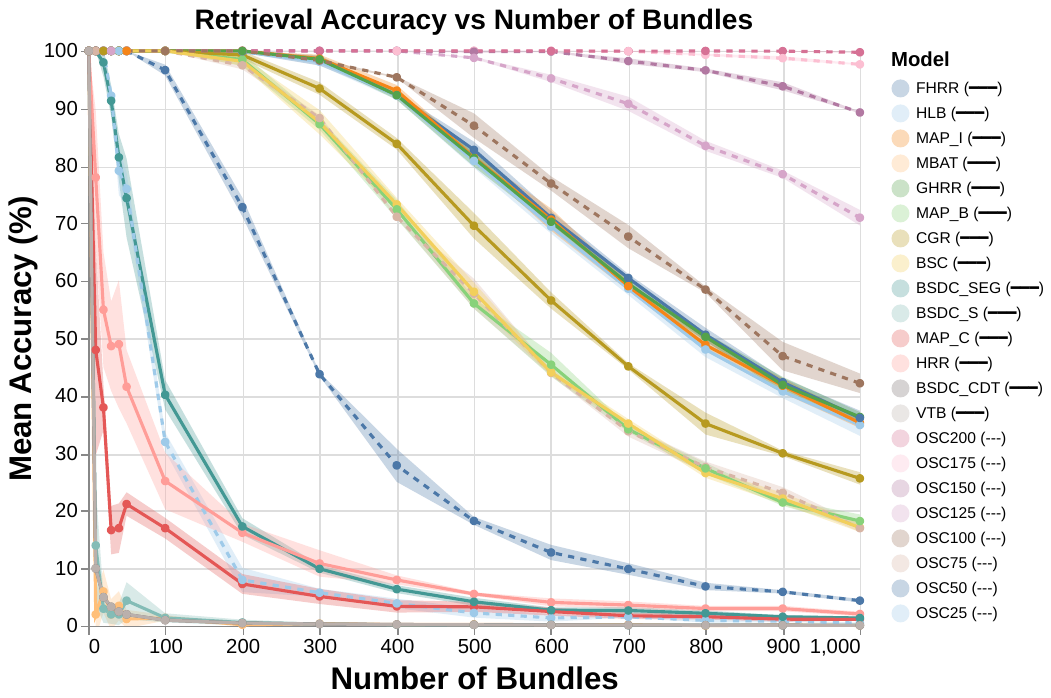}
    \end{subfigure}
    
    \caption{Performance comparison of $14$ VSAs at $d=8100$ and OSC across $d=25,50,75,100,125,150,175,$ and $200$ for $p=2$ at increasing bundle counts with unique fillers and role depth $48$. \acr outperforms all VSAs on both retrieval and recognition tasks. Note the difference in $Y$ axes between graphs. Standard deviation is shown as the shaded regions. \acr's performance for a role depth of $48$ is the exact same as for a depth of $1$ (Figure \ref{fig:allVSAs_8100}).}
    \label{fig:allVSAs_8100_role48}
\end{figure*}
\begin{figure*}[htbp]
    \centering
    \begin{subfigure}[b]{0.47\textwidth}
        \centering
        \includegraphics[width=\textwidth, trim=0.40in 5.88in 2.15in 0.45in, clip]{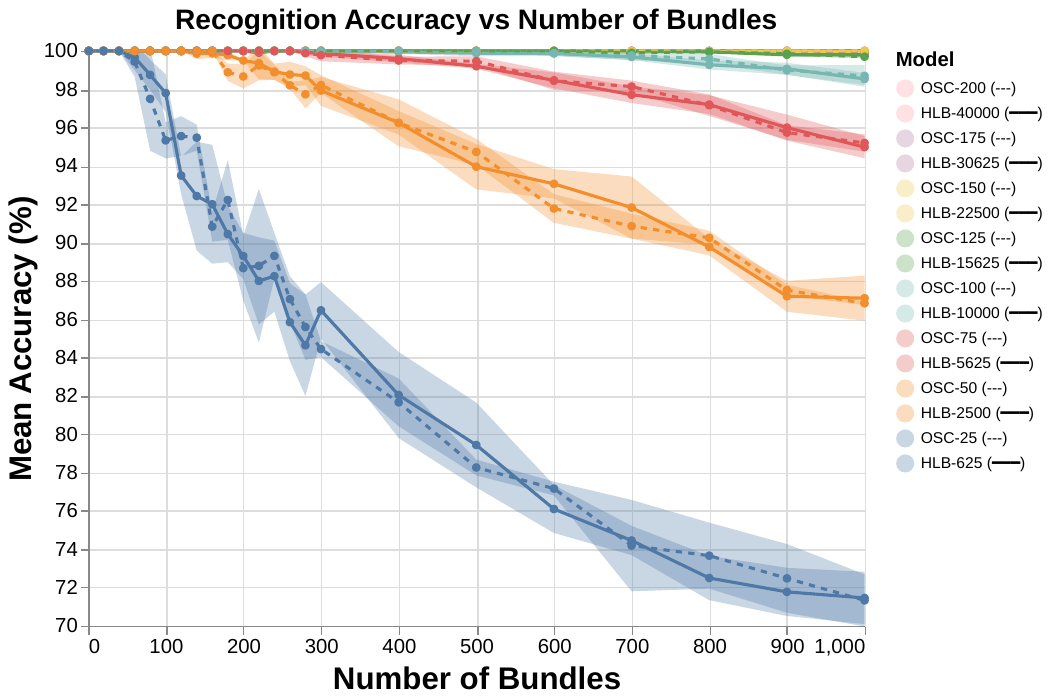}
    \end{subfigure}
    \hfill 
    \begin{subfigure}[b]{0.525\textwidth}
        \centering
        \includegraphics[width=\textwidth, trim=0.70in 5.88in 1.14in 0.45in, clip]{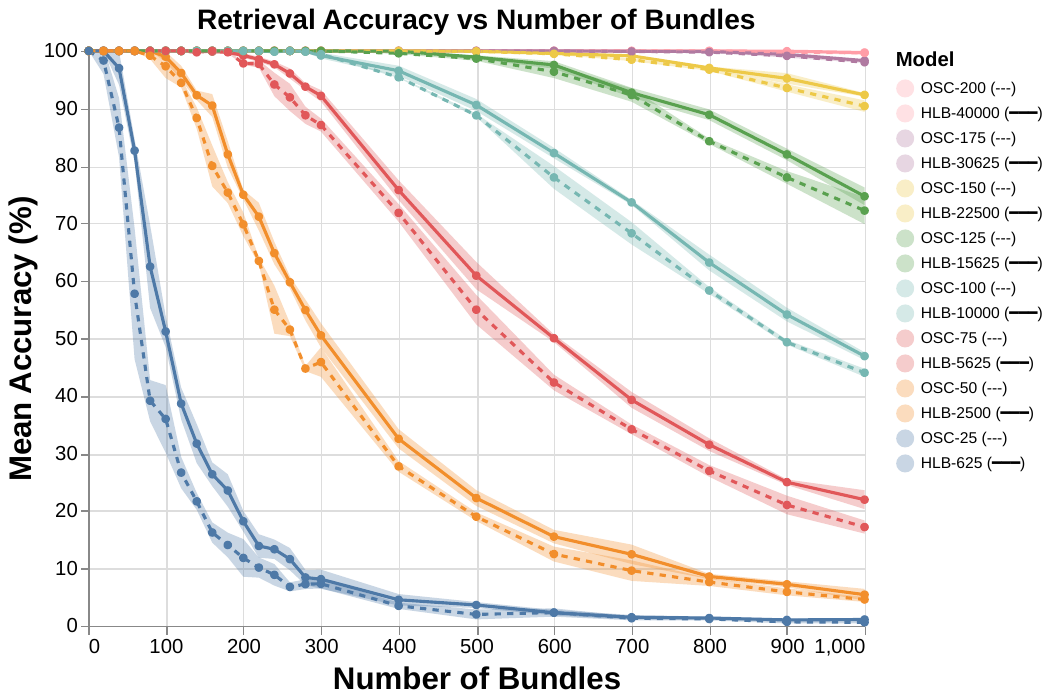}
    \end{subfigure}

    \caption{Performance of HLB at varying dimensions against \acr for $p=2$. Identically colored runs share the same superposition memory size (not total memory). VSAs are slightly superior than \acr in memory capacity, but \acr's sub-linear scaling requires the VSA to have $200^2 * 1001 / \left(200^2 + 2 * 200 * 1000\right) = 91\times$ \acr's total memory to get close in retrieval accuracy at $1000$ bundles. Standard deviations are denoted by the shaded regions.}
    \label{fig:memorycomp2}
\end{figure*}
\begin{figure*}[htbp]
    \centering
    \begin{subfigure}[b]{0.47\textwidth}
        \centering
        \includegraphics[width=\textwidth, trim=0.40in 5.88in 2.15in 0.45in, clip]{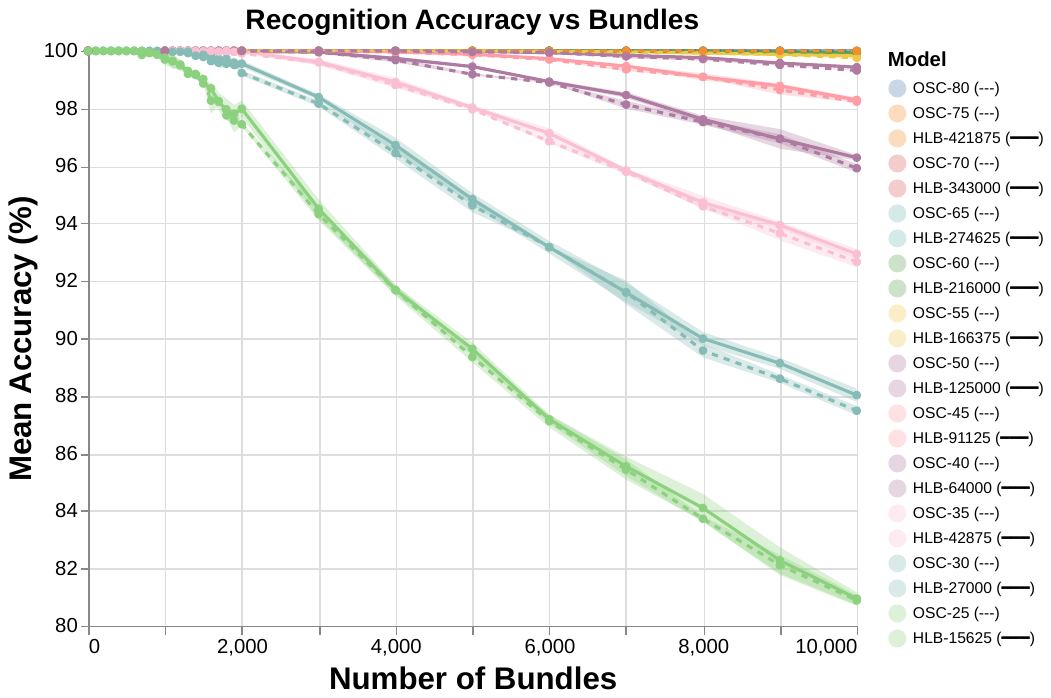}
    \end{subfigure}
    \hfill 
    \begin{subfigure}[b]{0.525\textwidth}
        \centering
        \includegraphics[width=\textwidth, trim=0.70in 5.88in 1.14in 0.45in, clip]{pdfs/memorycomp3_retrieval.pdf}
    \end{subfigure}

    \caption{Performance of HLB at varying dimensions against \acr for $p=3$. Identically colored runs share the same superposition memory size (not total memory). VSAs are slightly superior than \acr in memory capacity, but \acr's sub-linear scaling requires the VSA to have $70^3 * 10001 / \left(80^3 + 3 * 80 * 10000\right) \approx 1178\times$ \acr's total memory to achieve comparable retrieval accuracy. Standard deviations (shaded regions) are present but negligible.}
    \label{fig:memorycomp3}
\end{figure*}
\begin{figure}[htbp]
    \centering
    \includegraphics[width=0.7\columnwidth, trim=0.45in 5.48in 1.0in 0.45in, clip]{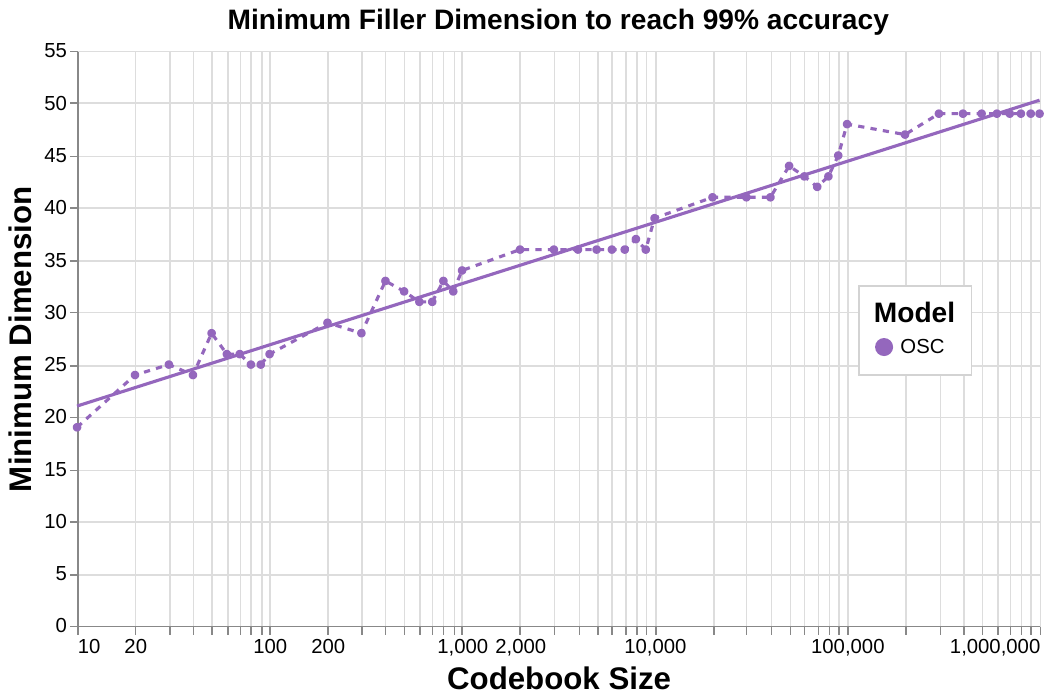} 
      \caption{Adapted from Figure \ref{fig:itemmem_main}, this plot excludes VSAs to highlight the logarithmic relationship between the filler dimension and codebook size for \acr. Minimum filler dimension required to achieve $99\%$ accuracy. A dimension was selected only if mean accuracy across 10 trials exceeded $99\%$ at that dimension but fell below $99\%$ at the next lowest dimension. \acr exhibits the same favorable logarithmic scaling with codebook size as VSAs. The dotted line tracks the actual dimension points, while the solid line represents the best fit.}
    \label{fig:itemmem_app}
\end{figure}

\FloatBarrier

\subsection{Discriminative Recognition vs Generative Retrieval}
\label{app:disc_vs_gen}
We evaluated \acr retrieval with recognition. To determine if this methodology impacts VSA performance, we applied the same recognition-based retrieval to FHRR, HLB, MAP\_I, and VTB. For FHRR, HLB, and MAP\_I, binding and unbinding are adjoint operations, so $\langle \text{Unbind}(M,r), g \rangle = \langle M, \text{bind}(r, g) \rangle$ for every candidate $g$, and the two methods return the same $\arg\max$. VTB does not satisfy this adjoint relation, but we observed no significant difference between standard retrieval and recognition-based retrieval at dimensions $d = 4096$ and $d = 8100$. As shown in Figure \ref{fig:ret_vs_reg}, the accuracy closely mirrors the results in Figures \ref{fig:allVSAs_4096} and \ref{fig:allVSAs_8100}.
\begin{figure*}[htbp]
    \centering
    \begin{subfigure}[b]{0.513\textwidth}
        \centering
        \includegraphics[width=\textwidth, trim=0.40in 5.88in 1.0in 0.45in, clip]{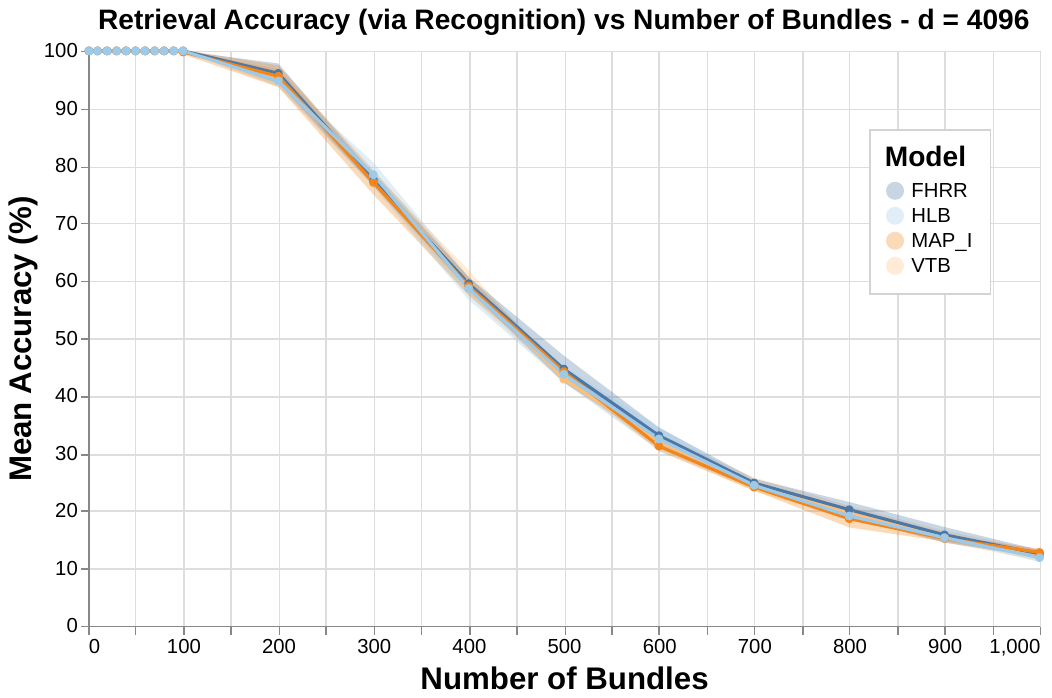}
    \end{subfigure}
    \hfill 
    \begin{subfigure}[b]{0.477\textwidth}
        \centering
        \includegraphics[width=\textwidth, trim=0.70in 5.88in 1.14in 0.45in, clip]{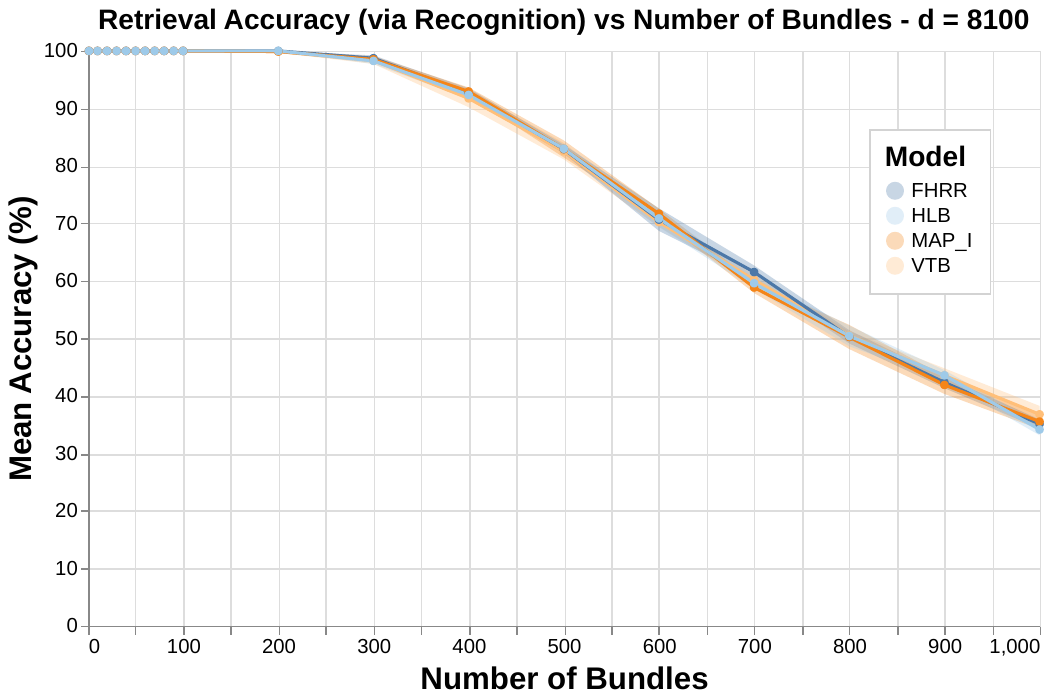}
    \end{subfigure}

    \caption{Retrieval accuracy when using the recognition-based retrieval approach for FHRR, HLB, MAP\_I, and VTB. The curves for $d=4096$ and $d=8100$ match those of Figure \ref{fig:allVSAs_4096} and Figure \ref{fig:allVSAs_8100}. Shaded regions denote standard deviations.}
    \label{fig:ret_vs_reg}
\end{figure*}

\subsection{Carving dimension.}
\label{app:experiments_carving}
We investigate the optimal carving dimension $k_1$ into which filler components are projected. The trade-off involves two competing terms, cross-context interference (favoring large $k_1$) and within-context filler discrimination (favoring small $k_1$). Empirically, we find that scaling the carving dimension with $\mathcal{O}(\sqrt{d})$ to be optimal. While further analysis is needed to determine if the true optimum scales as $\mathcal{O}\left(d^{\frac{1}{p}}\right)$, the practical distinction is negligible in our regime. The flexibility of the tensor order $p$ allows us to operate with relatively small filler dimensions, where a broad range of subspace sizes yield stable results.

\begin{figure}[htbp]
    \centering
    \includegraphics[width=0.5\columnwidth, trim=0.45in 5.5in 1.0in 0.45in, clip]{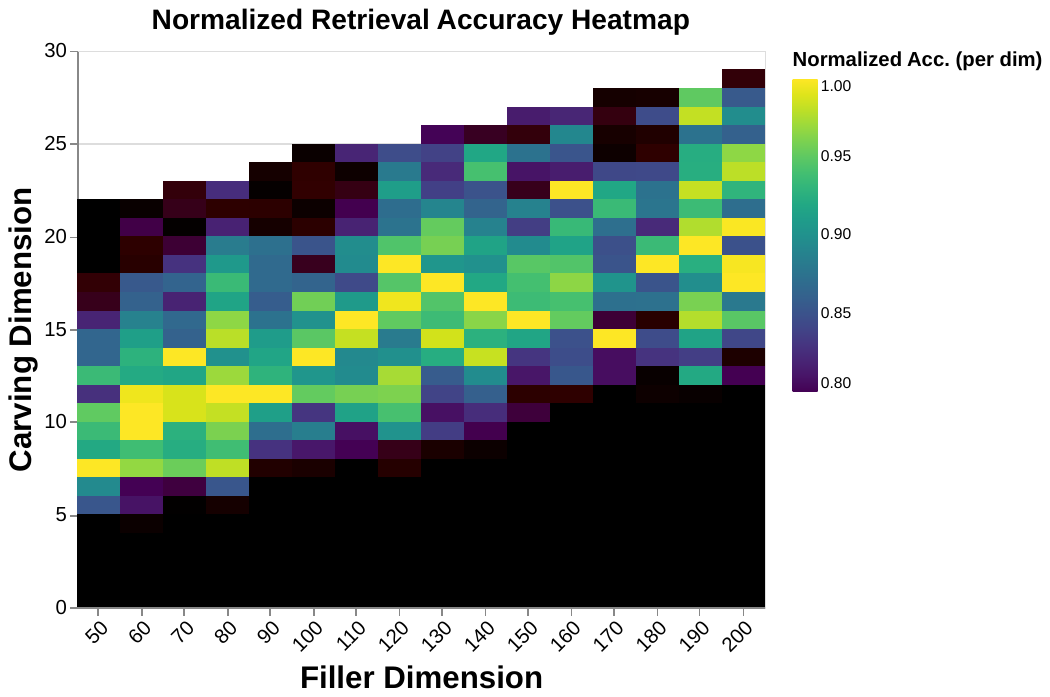} 
      \caption{Heatmap of retrieval accuracy for filler dimensions ranging from $50$ to $200$ in increments of $10$. Accuracy is normalized per dimension, with the maximum value scaled to $1$.}
    \label{fig:optimal_carving_dim}
\end{figure}

\subsection{FLOP Analysis}
\label{app:flops}

We complement the wall-clock measurements in Table~\ref{tab:speed_benchmarks} with a FLOP analysis of the retrieval operation. Counts are derived from the matrix dimensions in the timed code.

\paragraph{HLB.} Retrieval consists of an elementwise division to unbind the role and a single matrix-vector product against the candidate codebook. The dominant cost is
\begin{equation}
\mathrm{FLOPs}_{\mathrm{HLB}} \approx 2 L d,
\end{equation}
where $L$ is the vocabulary size and $d$ is the dimension.

\paragraph{\acr.} Retrieval consists of three steps. First, the complement role-subspace projector $P = r^\top r$ is formed, costing $2 k_1 d^2$ where $k_1$ is the carving dimension. Second, the candidates are projected through $P$, costing $2 L p d^2$. Note that rejecting onto the orthogonal complement is the same as projecting onto the subspace. Third, the projected candidates are contracted against the order-$p$ memory tensor, costing $2 L d^p$. The dominant cost is
\begin{equation}
\mathrm{FLOPs}_{\acr} \approx 2 k_1 d^2 + 2 L p d^2 + 2 L d^p.
\end{equation}

\paragraph{Benchmark FLOP counts.} Table~\ref{tab:flops} reports total FLOPs for the three configurations of Table~\ref{tab:speed_benchmarks}, using $k_1 = \lfloor\sqrt{d}\rfloor$ as in the timing code. \textbf{B1}: \acr ($d=64, p=2$) vs HLB ($d=4{,}096$). \textbf{B2}: \acr ($d=200, p=2$) vs HLB ($d=40{,}000$). \textbf{B3}: \acr ($d=75, p=3$) vs HLB ($d=421{,}875$).

\begin{table}[h]
\centering
\begin{tabular}{llrrr}
\toprule
& & $L = 100$ & $L = 1{,}000$ & $L = 10{,}000$ \\
\midrule
\multirow{2}{*}{\textbf{B1}}
& HLB  & 0.82M & 8.19M & 81.9M \\
& \acr & 2.52M & 24.6M & 246M  \\
\midrule
\multirow{2}{*}{\textbf{B2}}
& HLB  & 8.0M  & 80M   & 800M  \\
& \acr & 25.1M & 241M  & 2.4B \\
\midrule
\multirow{2}{*}{\textbf{B3}}
& HLB  & 84.4M & 844M  & 8.44B \\
& \acr & 87.8M & 878M  & 8.78B \\
\bottomrule
\end{tabular}
\vspace{0.2cm}
\caption{Total FLOPs for HLB vs.~\acr retrieval at the configurations of Table~\ref{tab:speed_benchmarks}. \acr's FLOP count exceeds HLB's at every configuration, yet wall-clock timing favors \acr at scale because the bottleneck is memory bandwidth rather than arithmetic throughput. When \acr has $p=2$, the FLOPs count is $\approx 3\times$ HLB, while when \acr has $p=3$ the FLOPs count is approximately equal due to the very large filler dimension.}
\label{tab:flops}
\end{table}
\clearpage
\section{Hyperparameters}
\label{app:hyperparameters}
Table \ref{tab:hyperparams} contains the hyperparameters for the XML experiments. Figure \ref{fig:optimal_carving_dim} discusses the optimal carving dimension based on the filler dimension of \acr.
\begin{table}[h]
    \centering
    \caption{Complete hyperparameter configuration. Expansion refers to the width multiplier for the second hidden layer (e.g., $512 \times 2 = 1024$ for Eurlex). Abbreviations: BS (Batch Size), Drop (Dropout Rate), Output Dim (Output dimension for HLB/VTB/MAP), \acr d (Filler dimension for \acr). Our implementation builds upon the codebase of \citet{alam2024walshhadamardderivedlinear}. We utilize the optimal hyperparameters reported in their work. When running their code, we could not recreate HLB's dominance. We contacted the authors, but could not resolve the issue.}
    \label{tab:hyperparams}
    \resizebox{0.8\textwidth}{!}{%
    \begin{tabular}{lccccccc}
        \toprule
        \textbf{Dataset} & \textbf{Epochs} & \textbf{BS} & \textbf{Hidden} & \textbf{Expansion} & \textbf{Drop} & \textbf{Output Dim} & \textbf{\acr d} \\
        \midrule
        \textsc{Bibtex}    & 10 & 64 & 512 & 1 & 0.00 & 400 & 125 \\
        \textsc{Mediamill} & 10 & 64 & 512 & 1 & 0.00 & 400 & 125 \\
        \textsc{Delicious} & 10 & 64 & 512 & 1 & 0.25 & 400 & 125 \\
        \textsc{Eurlex-4K} & 10 & 256 & 512 & 2 & 0.35 & 1,600 & 200 \\
        \midrule
        \textsc{Eurlex-4.3K} & 25 & 256 & 512 & 1 & 0.00 & 400 & 100 \\
        \textsc{Wiki10-31K}  & 25 & 16 & 512 & 1 & 0.25 & 3,000 & 300 \\
        \textsc{Delicious-200K}   & 25 & 8 & 1024 & 1 & 0.25 & 3,000 & 300 \\
        \textsc{Amazon-13K}  & 25 & 1024 & 512 & 1 & 0.25 & 3,000 & 300 \\
        \bottomrule
    \end{tabular}%
    }
\end{table}

\end{document}